\documentclass{article}

\PassOptionsToPackage{numbers, compress}{natbib}

\usepackage[preprint]{neurips_2026}
\usepackage{amsmath}

\usepackage[utf8]{inputenc} 
\usepackage[T1]{fontenc}    
\usepackage{hyperref}       
\usepackage{url}            
\usepackage{booktabs, threeparttable, makecell}       
\usepackage{amsfonts}       
\usepackage{nicefrac}       
\usepackage{microtype}      
\usepackage{xcolor}         

\usepackage{graphicx}
\usepackage{amssymb}      
\usepackage{amsthm}       
\usepackage{algorithm}    
\usepackage{algpseudocode}
\usepackage{xspace}
\usepackage{enumitem}

\newcommand{\parhead}[1]{\noindent\textbf{#1}.}
\newtheoremstyle{mytheoremstyle}{6pt}{0pt}{\itshape}{}{\bfseries}{.}{.5em}{} 
\theoremstyle{mytheoremstyle}
\newtheorem{theorem}{Theorem}[section]
\newtheorem{lemma}{Lemma}[section]
\newtheorem{assumption}{Assumption}[section]

\newtheorem{definition}{Definition}[section]
\newtheorem*{remark}{Remark}
\newcommand{\tool}{PANDA\xspace}
\newcommand{\base}{CROWN\xspace}

\newcommand{\R}{\mathbb{R}}
\newcommand{\Ball}{\mathbb{B}}
\newcommand{\preact}{z}
\newcommand{\postact}{h}
\newcommand{\prover}{\mathcal{P}}
\newcommand{\verifier}{\mathcal{V}}

\newcommand{\onevec}{\mathbf{1}}
\newcommand{\la}{\underline{a}}
\newcommand{\ua}{\overline{a}}
\newcommand{\lbof}{\underline{b}}
\newcommand{\ubof}{\overline{b}}
\newcommand{\lA}{\underline{\mathbf{A}}}
\newcommand{\uA}{\overline{\mathbf{A}}}
\newcommand{\ld}{\boldsymbol{\underline{d}}}
\newcommand{\ud}{\boldsymbol{\overline{d}}}

\newcommand{\Picom}{\Pi_{\textsc{com}}}
\newcommand{\Piarith}{\Pi_{\textsc{arith}}}
\newcommand{\Pilookup}{\Pi_{\textsc{lookup}}}

\newcommand{\bpreact}{\boldsymbol{z}}
\newcommand{\bpostact}{\boldsymbol{h}}
\newcommand{\bla}{\boldsymbol{\underline{a}}}
\newcommand{\bua}{\boldsymbol{\overline{a}}}
\newcommand{\blbof}{\boldsymbol{\underline{b}}}
\newcommand{\bubof}{\boldsymbol{\overline{b}}}

\usepackage{siunitx}
\title{Certified but Private: Scalable Zero-Knowledge Proofs for Neural Network Guarantees}

\author{
  Youwei Zhong\\
  Yale University\\
  \texttt{youwei.zhong@yale.edu}\\
  \And
  Ben Merbaum\\
  Yale University\\
  \texttt{ben.merbaum@yale.edu}\\
  \And
  Timos Antonopoulos\\
  Yale University\\
  \texttt{timos.antonopoulos@yale.edu}\\
  \And
  Ning Luo\\
  University of Illinois Urbana-Champaign\\
  \texttt{nl27@illinois.edu}\\
  \And
  Charalampos Papamanthou\\
  Yale University\\
  \texttt{charalampos.papamanthou@yale.edu}\\
  \And
  Katerina Sotiraki\\
  Yale University\\
  \texttt{katerina.sotiraki@yale.edu}\\
  \And
  Ruzica Piskac\\
  Yale University\\
  \texttt{ruzica.piskac@yale.edu}\\
}

\begin{document}

\maketitle

\begin{abstract}
With the growing deployment of machine learning models, formal guarantees of the robustness and fairness of these models have become increasingly important in safety-critical and legal-compliance settings. However, model parameters are often commercial secrets that cannot be disclosed to auditors or end users. To this end, we present PANDA, a scalable system that uses zero-knowledge proofs (ZKPs) to prove the robustness and fairness properties of a model without revealing its private parameters. PANDA is built on top of CROWN, an efficient robustness certification framework that is used in many state-of-the-art formal verification tools for neural networks. The core contribution of PANDA is a novel algorithm for proving linear relaxation bounds for non-linear activation layers, yielding simple, lightweight proofs. Remarkably, our system can generate proofs of local robustness for neural networks with more than 2.9M parameters in 5 minutes, and can verify them in 10 seconds. Prior ZKP-based robustness system rely on exponential-time algorithms that cannot scale to nontrivial networks. In contrast, PANDA scales polynomially in the number of neurons in a network, allowing us to support neural networks 4 orders of magnitude larger than previous approaches with significantly reduced prover overhead.
\end{abstract}

\section{Introduction}
A machine learning (ML) model is \emph{robust} if small perturbations to its input do not cause large or unpredictable changes in its output. As ML systems are increasingly deployed in safety-critical settings such as healthcare, autonomous driving, and financial decision-making, scalable methods for formally verifying robustness are becoming ever more important. Beyond the model owners' own interest in reliability, regulators and downstream customers are increasingly demanding formal robustness guarantees~\cite{eu21, ca18}.
For example, a hospital may train a clinical model that predicts patient risk levels from reported symptoms. A user deciding whether to deploy the model naturally requires assurance that the model is robust: small perturbations to a patient's reported symptoms, such as a typo or synonym, should not change the patient's predicted risk level.

Formally, consider a classification model that takes as input a vector $\boldsymbol{x}$, which may encode an image, and outputs a predicted label. A standard notion of \textit{local robustness} requires that, for a given input $\boldsymbol{x}_0$, all sufficiently close (with respect to a parameter $
\varepsilon$) inputs receive the same prediction. This property can be formalized as:
\begin{equation}
\label{robustness}
\forall \boldsymbol{x}.\ \|\boldsymbol{x} - \boldsymbol{x}_0\| < \varepsilon \Rightarrow \mathrm{Classification}(\boldsymbol{x}) = \mathrm{Classification}(\boldsymbol{x}_0).
\end{equation}

One approach to proving local robustness properties is to provide users or third-party auditors with full access to the model parameters. However, modern ML models constitute valuable intellectual property. Networks are trained on proprietary datasets, and exposing model parameters can enable sophisticated white-box attacks~\cite{Nasr2018} and leak information about the training data~\cite{measuringdataleakage}. In the medical setting above, the hospital cannot disclose the model weights, as this may reveal protected health information about patients and violate HIPAA regulations.

The tension between verifiability and model confidentiality can be resolved by using \textit{zero-knowledge proofs} (ZKPs)~\cite{zkp}, a cryptographic primitive that enables a prover to convince a verifier that a statement holds without revealing any information beyond the statement's validity. A growing line of work called zkML has used ZKPs to hide model parameters in ML-related computation~\cite{zkGPT,deepprove}. However, most of it has focused on privacy-preserving ML inference and training integrity, leaving local robustness comparatively underexplored. Existing work that considers the related notion of fairness~\cite{fairproof} can only support models with $\approx$ 100 parameters, far too small for real-world applications.

Recent developments in formal verification have enabled fast robustness certification of deep neural networks (DNNs), but these tools reveal information about the model~\cite{neuralsat_proof, torchlean}. A natural idea is to compose a robustness certification algorithm with a ZKP to hide model parameters and inherit the efficiency and scalability of the underlying algorithm. However, this composition introduces several technical challenges: ZKPs natively operate with linear functions and must be adapted to handle non-linear activation functions. In addition, ZKPs are not typically compatible with floating-point arithmetic. Finally, performing computation within ZKPs introduces a significant overhead in the proving time, which can act as a barrier to scaling to large real-world DNNs. This leads us to the following problem statement:

\begin{center}
    \textit{Can we verify local robustness properties of large neural networks in zero-knowledge?}
\end{center}

We answer this question in the affirmative. In this paper, we propose \tool (Proofs of Automated Neural-network Derived Affine bounds), a system that combines ZKPs with the \base verification algorithm~\cite{crown} to produce publicly verifiable proofs of a neural network's local robustness. \base is an efficient linear bound propagation method that underlies many state-of-the-art neural network verification systems~\cite{alphacrown, betacrown, neuralsat}. By integrating \base into a zero-knowledge framework, \tool enables model owners to certify robustness properties while keeping the underlying model parameters private.

To the best of our knowledge, \tool is the most efficient and scalable ZKP system for proving local robustness of private neural networks to date. \tool supports networks containing up to 2.9 million parameters, exceeding the scale of prior approaches by more than 4 orders of magnitude, while requiring a proving time of 5 minutes and a verification time of 10 seconds. Moreover, the prover runtime is polynomial in the number of neurons in the network, enabling practical certification for substantially larger models than previously possible.

\tool is also the first privacy-preserving local robustness certification system to support DNNs with transcendental activation functions such as sigmoid and tanh. We introduce a novel method for verifying linear relaxations of activation functions over continuous intervals by evaluating four pointwise inequalities, yielding a ZK-friendly check.

\parhead{Technical highlights} We propose a \textit{certification algorithm} which enables verifying the correctness of computation without performing the entire computation trace within a ZKP, drastically improving the efficiency of \tool. In particular, the \base algorithm requires finding a pair of linear lower and upper bounds that sandwich the activation function over a given interval. \base selects appropriate bounds through a costly iterative search~\cite{crown}. We observe that this search can be performed outside of the ZKP in order to reduce the prover overhead, and then the computed lower and upper bounds can be verified using a simple system of constraints, which we name the \textit{Four-Point Relaxation Gadget} (Section~\ref{sec:gadget}). This principle is a through-line of \tool: the prover first performs the \base algorithm outside of a ZKP, and then certifies its results through a reduced system of constraints within the ZKP. 

Instead of using a general-purpose ZKP backend, we design a \textit{customized backend} that make use of different cryptographic primitives for proving different operations. This enables \tool to achieve greater prover efficiency than previous work and scale to larger networks.
\section{Preliminaries}\label{sec:pre}
A \textbf{zero-knowledge proof} (ZKP) allows a prover $\prover$ to convince a verifier $\verifier$ that a public statement $x$ is true, without revealing any further information about \textit{why} it is true. Private information explaining the ``why'' is called the witness $w$. For instance, a public statement $x$ may be that an ML model is locally robust at a given point, whereas the private witness $w$ contains the private model weights and auxiliary values produced in computation. $\prover$ uses $x$ and $w$ to outputs a proof $\pi$, which $\verifier$ checks alongside $x$ and either accepts or rejects.

A \textbf{commit-and-prove} ZKP system enables $\prover$ to first produce a commitment $c_w$ to the witness $w$ which is \textit{hiding} ($c_w$ reveals nothing about $w$) and \textit{binding} ($\prover$ cannot find a different witness $w'\neq w$ corresponding to $c_w$). The commitment $c_w$ is part of the public statement $x$, and $\prover$'s claim is that $\prover$ knows a witness $w$ whose commitment is $c_w$ and that $w$ is a valid witness for $x.$

\tool is a commit-and-prove ZKP system which uses three underlying cryptographic primitives:
\begin{enumerate}
    \item A \textbf{polynomial commitment scheme}, $\Picom$, which consists of three algorithms:
    \begin{itemize}
        \item \textsc{Commit} (run by $\prover$) takes as input a polynomial $f$ and outputs a commitment $c_f$ to $f$ which is \textit{hiding} and \textit{binding}.
        \item \textsc{Eval} (run by $\prover$) takes as input $f$, a point $x$, and a value $y$, and outputs a proof $\pi$, also called an \textit{opening}, claiming that $y=f(x)$.
        \item \textsc{Verify} (run by $\verifier$) takes as input $c_f,x,y,\pi,$ and either accepts or rejects.
    \end{itemize}
    $\Picom$ is \textit{complete} if \textsc{Verify} accepts whenever $c_f\leftarrow \textsc{Commit}(f)$, $y=f(x)$, and $\pi\leftarrow \textsc{Eval}(f,x,y)$. We say $\Picom$ is \textit{evaluation-binding} if a malicious $\prover$ cannot produce an accepting proof $\pi$ for $x,y$ if $y\neq f(x)$. We say $\Picom$ is \textit{zero-knowledge} if $\pi$ reveals no information about $f.$
    \item A \textbf{ZKP for matrix arithmetic}, $\Piarith$.
    
    We can encode matrices or vectors as polynomials using interpolation and then use $\Picom$ for commitments to matrices or vectors.
    \begin{itemize}
    \item The private witness $w$ is a list of matrices (e.g., $\mathbf{A,B,C,D}$).
    \item The public statement $x$ is a list of commitments to these matrices (e.g. $c_A,c_B,c_C,c_D$) and a claimed arithmetic relation between them (e.g. $\mathbf{A=BC+D}$).
    \item $\prover$ proves to $\verifier$ that $\prover$ knows matrices in $w$ whose commitments match those in $x$ and satisfy this matrix arithmetic equality.
    \end{itemize}
    \item A \textbf{ZKP for table lookups}, $\Pilookup$ (also known as a \textit{lookup argument}).
    \begin{itemize}
        \item The private witness $w$ is a \textit{lookup vector} $\boldsymbol{a}$.\footnote{We present a simplified abstract interface. The concrete lookup arguments we use embed auxiliary information pertaining to $\boldsymbol{a}$ and $\boldsymbol{t}$ in the witness $w$ which is committed and appended to the public statement $x$.}
        \item The public statement $x$ is a commitment $c_a$ to $\boldsymbol{a}$ as well as a \textit{table vector} $\boldsymbol{t}$.
        \item $\prover$ proves to $\verifier$ that $\prover$ knows a vector $\boldsymbol{a}$ whose commitment is $c_a$ and such that each entry of $\boldsymbol{a}$ is in the table $\boldsymbol{t}$.
    \end{itemize}
\end{enumerate}
$\Piarith$ and $\Pilookup$ each contain two algorithms:
    \begin{itemize}
        \item \textsc{Prove} (run by $\prover$) takes as input $w$ and $x$ and outputs a proof $\pi$. As a subroutine, $\prover$ produces openings of masked versions of the witnesses via $\Picom.\textsc{Eval}$.
        \item \textsc{Verify} (run by $\verifier$) takes as input $x$ and $\pi$ and accepts or rejects. As a subroutine, $\verifier$ calls $\Picom.\textsc{Verify}$ to check the opening proofs (which are contained in $\pi$).
    \end{itemize}
We say that a ZKP is \textit{complete} if, whenever $w$ is a valid witness for $x$, $\textsc{Verify}$ accepts the proof $\pi$ generated by $\textsc{Prove}$. A ZKP is \textit{sound} if whenever $x$ is an invalid statement, no computationally bounded prover can forge a proof $\pi$ that causes $\textsc{Verify}$ to accept. A ZKP is \textit{zero-knowledge} if $\pi$ reveals no information about $w$ beyond the fact that $w$ exists.

$\Pilookup$ can be used for \textbf{range proof} ZKPs, which prove that each element of a vector $\boldsymbol{a}$ lies within an interval $[l,r].$ This is done simply by setting the table vector $\boldsymbol{t}=(l,l+1,\dots,r).$ We often write one-sided range proofs as $\boldsymbol{a}\geq 0$, where $l=0$ and $r$ is chosen implicitly as the maximum attainable value of elements in $\boldsymbol{a}$, dependent on the quantization.

$\Pilookup$ can also prove \textbf{non-linear function evaluations}; that is, for a set of points $\{(x_i,y_i)\}_{i\in[n]}$ and a non-linear function $f$, $\Pilookup$ proves that $f(x_i)=y_i$ for each $i\in[n]$. This is achieved by setting $\boldsymbol{a}=\left((x_i,y_i)\right)_{i\in[n]}$ and $\boldsymbol{t}=\left((z,f(z))\right)_{z}$, where $z$ ranges over possible values in the domain.

\parhead{Quantization}
Most existing ZKPs are incompatible with floating-point computation. Following precedent in prior zkML papers~\cite{zkGPT,deepprove}, we perform \textit{quantization} to translate all computations to finite field arithmetic. We follow the quantization approach introduced in~\cite{8578384}, where we represent each real number $x$ with a $Q$-bit integer $q_x\in [0,2^Q)$ satisfying $x\cdot S_x\approx q_x$ for an optimally chosen scaling factor $S_x\gg 1.$

Per-layer matrices and tensors in the neural network typically share a single scaling factor, and model weights and biases are all initialized with it. To sum two values $x$ and $y$, we first rescale them to share the same scaling factor, then sum the quantized integers $q_x$ and $q_y$. To multiply two values $x$ and $y$, we multiply the quantized integers $q_x$ and $q_y$, and set the scaling factor of the product to $S_xS_y$. To rescale $q_x\approx x\cdot S_x$ to some new scaling factor $S_z$, we compute $q_z=\left\lfloor S_z q_x/S_x\right\rfloor.$
\section{\base Algorithm}
\label{sec:crown}
In this section, we review the \base algorithm~\cite{crown}, a crucial building block for our \tool zero-knowledge protocol. We present each step of the algorithm in a way that enables compatibility with ZKP systems. The \base algorithm is provided in full detail in Appendix~\ref{app:crown-details:pseudocode}.

\subsection{Setting and Goal}
\label{sec:crown:setting}

We consider an $m$-layer feed-forward network $f: \R^{n_0} \to \R^{n_m}$ defined by the recursion
\begin{equation}\label{eq:network}
   \bpreact^{(k)} = \mathbf{W}^{(k)} \bpostact^{(k-1)} + \boldsymbol{b}^{(k)}\in\R^{n_k}, \qquad \bpostact^{(k)} = \sigma\!\left(\bpreact^{(k)}\right)\in\R^{n_k}
\end{equation}
for $k = 1, \dots, m$, with weight matrices $\mathbf{W}^{(k)} \in \R^{n_k \times n_{k-1}}$, biases $\boldsymbol{b}^{(k)} \in \R^{n_k}$, and a coordinate-wise activation function $\sigma:\R^{n_k}\rightarrow \R^{n_k}$ for all $k$. The input is $\bpostact^{(0)} = \boldsymbol{x}\in\R^{n_0}$ and the output is $f(\boldsymbol{x}) =\bpreact^{(m)}\in\R^{n_m}.$ The $\ell^\mathrm{th}$ layer has $n_\ell$ neurons, and the output neurons on input $\boldsymbol{x}_0$ are denoted $f_c(\boldsymbol{x}_0)$ for $c\in \{1,\dots,n_m\}.$ For a classification model, the choice of $c$ which maximizes $f_c(\boldsymbol{x}_0)$ is the \emph{predicted class} of $\boldsymbol{x}_0.$

Given an input $\boldsymbol{x}_0$, an $\ell^\infty$-perturbation radius $\varepsilon$, and a predicted class $c$ of $\boldsymbol{x}_0$, the network is \emph{locally robust} at $\boldsymbol{x}_0$ with respect to a targeted attack class $t \neq c$ if
\begin{equation}\label{eq:robust}  
f_t(\boldsymbol{x})\le f_c(\boldsymbol{x}), \qquad\forall \boldsymbol{x}\in \Ball(\boldsymbol{x}_0,\varepsilon),
\end{equation}
where $\Ball(\boldsymbol{x}_0, \varepsilon) := \{\boldsymbol{x} : \|\boldsymbol{x} - \boldsymbol{x}_0\|_\infty \le \varepsilon\}$.
Such a claim is a special case of the general statement
\begin{equation}\label{eq:generalized-robust}
    \mathbf{C}\cdot f(\boldsymbol{x})\le \boldsymbol{u}, \qquad \forall \boldsymbol{x}\in \Ball(\boldsymbol{x}_0,\varepsilon),
\end{equation}
where $\mathbf{C}\in \R^{N\times n_m},\boldsymbol{u}\in \R^{N}$, obtained by setting 
$N=1,\boldsymbol{u}=0$, and $\mathbf{C}=\boldsymbol{e}_t^{\top}-\boldsymbol{e}_c^{\top}$, where $\boldsymbol{e}_i$ is the $i^\mathrm{th}$ basis vector.

\parhead{Goal of the algorithm}
\base certifies statements of the form~\eqref{eq:generalized-robust} by running an algorithm which computes a vector $\boldsymbol{u}'$ such that $\mathbf{C}\cdot f(\boldsymbol{x})\leq \boldsymbol{u}'$, and then ultimately checking that $\boldsymbol{u}'\le \boldsymbol{u}$ entry-wise. Crucially, this algorithm computes \emph{linear bounds} on the pre-activation vector $\bpreact^{(k)}$ for each layer $k$ as a function of the input $\boldsymbol{x}$:
\begin{equation}\label{eq:layer-bound}
  \lA^{(k)} \boldsymbol{x} + \ld^{(k)} \;\le\; \bpreact^{(k)} \;\le\; \uA^{(k)} \boldsymbol{x} + \ud^{(k)}, \qquad \forall \boldsymbol{x} \in \Ball(\boldsymbol{x}_0, \varepsilon),
\end{equation}
where $\lA^{(k)}, \uA^{(k)} \in \R^{n_k \times n_0}$ and $\ld^{(k)}, \ud^{(k)} \in \R^{n_k}$. In the final layer $m$, \base constructs linear bounds on $\mathbf{C}\cdot \bpreact^{(m)}$ rather than on $\bpreact^{(m)}$.

\parhead{Outer forward pass, inner backward pass}
\base builds~\eqref{eq:layer-bound} layer by layer, passing forward from $k = 1$ to $k = m$. Computing the bounds at layer $k$, however, requires a \textit{backward pass} from layer $k$ down to the input, using the previously computed bounds for layers $\ell < k$. Once this is completed, a \textit{concretization} step transforms~\eqref{eq:layer-bound} into scalar bounds used to derive $\boldsymbol{u}'.$

\subsection{Backward Pass}
\label{sec:crown:backward}
The backward pass for layer $k$ begins with a set of concretized bounds $\{\boldsymbol{L}^{(\ell)}, \boldsymbol{U}^{(\ell)}\}_{\ell < k}$ for layers $\ell<k$, where each pre-activation vector $\bpreact^{(\ell)}$ satisfies the entry-wise inequality
\begin{equation}
  \boldsymbol{L}^{(\ell)} \;\le\; \bpreact^{(\ell)} \;\le\; \boldsymbol{U}^{(\ell)}, \qquad \forall \boldsymbol{x} \in \Ball(\boldsymbol{x}_0, \varepsilon).
\end{equation}
These are obtained by recursively calling \base for layers $\ell = 1,\dots,k-1$. The goal is to compute the linear bounds of~\eqref{eq:layer-bound} for layer $k$. We describe the upper bound case; the lower bound is symmetric.

\parhead{Activation relaxation}
A prerequisite is a pair of \emph{linear relaxations} of the activation $\sigma$. For every layer $\ell < k$, \base supplies four vectors $\bua^{(\ell)}, \bubof^{(\ell)}, \bla^{(\ell)}, \blbof^{(\ell)}$ such that
\begin{equation}\label{eq:relax}
  \bla^{(\ell)} \preact + \blbof^{(\ell)} \;\le\; \sigma(\bpreact^{(\ell)}) \;\le\; \bua^{(\ell)} \preact + \bubof^{(\ell)}, \qquad \forall \bpreact \in [\boldsymbol{L}^{(\ell)},\, \boldsymbol{U}^{(\ell)}].
\end{equation}
The linear relaxations are selected through an intricate procedure detailed in Appendix~\ref{app:crown-details:relaxation}.

\parhead{Invariants}
The backward pass maintains and updates an \emph{affine bound} $(\mathbf{A}, \boldsymbol{d})$ with $\mathbf{A} \in \R^{n_k \times n_\ell}$ and $\boldsymbol{d} \in \R^{n_k}$. At each depth $\ell$ from $k$ down to $0$, the bound satisfies one of two invariants:
\begin{align}
  \text{Invariant }\; \mathcal{I}_\postact^{(\ell)}\,:\quad &\bpreact^{(k)} \;\le\; \mathbf{A}\, \bpostact^{(\ell)} + \boldsymbol{d}, \label{eq:Ih} \\
  \text{Invariant }\; \mathcal{I}_\preact^{(\ell)}\,:\quad &\bpreact^{(k)} \;\le\; \mathbf{A}\, \bpreact^{(\ell)} + \boldsymbol{d}, \label{eq:Iz}
\end{align}
where the inequality is entry-wise and the bound holds for all $\boldsymbol{x} \in \Ball(\boldsymbol{x}_0, \varepsilon)$. The recursion starts at $\mathcal{I}_\postact^{(k-1)}$, ends with~\eqref{eq:layer-bound} at $\mathcal{I}_\postact^{(0)},$ and alternates between the two invariants as follows:
\begin{itemize}
  \item \textbf{Step 1 at depth $\ell$} transforms $\mathcal{I}_\postact^{(\ell)} \Rightarrow \mathcal{I}_\preact^{(\ell)}$ by linear relaxation through the activation.
  \item \textbf{Step 2 at depth $\ell$} transforms $\mathcal{I}_\preact^{(\ell)} \Rightarrow \mathcal{I}_\postact^{(\ell-1)}$ by substitution through the linear map.
\end{itemize}

\parhead{Initialization}
$\bpreact^{(k)} = \mathbf{W}^{(k)} \bpostact^{(k-1)} + \boldsymbol{b}^{(k)}$ establishes $\mathcal{I}_\postact^{(k-1)}$ by setting $\mathbf{A} \gets \mathbf{W}^{(k)}$ and $\boldsymbol{d} \gets \boldsymbol{b}^{(k)}.$

\parhead{Step 1: $\mathcal{I}_\postact^{(\ell)} \Rightarrow \mathcal{I}_\preact^{(\ell)}$}
We substitute $\bpostact^{(\ell)} = \sigma(\bpreact^{(\ell)})$ into~\eqref{eq:Ih} and apply the linear relaxation~\eqref{eq:relax}. Each entry of $\mathbf{A}$ uses the upper relaxation $(\ua^{(\ell)}_i, \ubof^{(\ell)}_i)$ when positive and the lower relaxation $(\la^{(\ell)}_i, \lbof^{(\ell)}_i)$ when negative, since multiplying by a negative number flips the inequality. Splitting $\mathbf{A}$ into 
\begin{equation}\label{eq:back-plusminus}
    \mathbf{A}_+ =\max(\mathbf{A},0),\qquad\mathbf{A}_-=\min(\mathbf{A},0)
\end{equation} 
yields the following update:
\begin{equation}\label{eq:relax-update}
  \boldsymbol{d} \gets \boldsymbol{d} + \mathbf{A}_+ \bubof^{(\ell)} + \mathbf{A}_- \blbof^{(\ell)}, \qquad \mathbf{A} \gets \mathbf{A}_+ \mathrm{diag}(\bua^{(\ell)}) + \mathbf{A}_- \mathrm{diag}(\bla^{(\ell)})
\end{equation}
where $\mathrm{diag}(\boldsymbol{v})$ denotes the diagonal matrix with entries of the vector $\boldsymbol{v}$ on its diagonal. After the update, $(\mathbf{A}, \boldsymbol{d})$ satisfies $\mathcal{I}_\preact^{(\ell)}$.

\parhead{Step 2: $\mathcal{I}_\preact^{(\ell)} \Rightarrow \mathcal{I}_\postact^{(\ell-1)}$}
We substitute $\bpreact^{(\ell)} = \mathbf{W}^{(\ell)} \bpostact^{(\ell-1)} + \boldsymbol{b}^{(\ell)}$, so
\begin{equation}\label{eq:linear-update}
  \boldsymbol{d} \gets \boldsymbol{d} + \mathbf{A} \boldsymbol{b}^{(\ell)}, \qquad \mathbf{A} \gets \mathbf{A} \mathbf{W}^{(\ell)}.
\end{equation}
The affine bound $(\mathbf{A},\boldsymbol{d})$ has advanced one level and now satisfies $\mathcal{I}_\postact^{(\ell-1)}$.

\parhead{Termination}
Iterating down to $\ell = 1$ produces $\mathcal{I}_\postact^{(0)}$, which is exactly equation~\eqref{eq:layer-bound} since $\bpostact^{(0)} = \boldsymbol{x}$. We finally retrieve $\uA^{(k)} \gets \mathbf{A}$ and $\ud^{(k)} \gets \boldsymbol{d}$.

\parhead{Final layer} In the final layer $k=m$, the backward pass must be modified in order to support the claim $\mathbf{C}f(\boldsymbol{x})\le \boldsymbol{u}$. Notably, invariants $\mathcal{I}_\postact^{(\ell)}$ and $\mathcal{I}_\preact^{(\ell)}$ replace the left-hand side $\bpreact^{(m)}$ with $\mathbf{C}\bpreact^{(m)}$. The initialization step is given by  $\mathbf{A} \gets \mathbf{C}\cdot \mathbf{W}^{(m)}$ and $\boldsymbol{d} \gets \mathbf{C}\cdot \boldsymbol{b}^{(m)}$. Steps 1 and 2 in the backward pass remain the same, and the final concretization yields the desired claim.

\subsection{Concretization}
\label{sec:crown:concretize}
Once the backward pass is complete for layer $k$, concretization yields the vectors $\boldsymbol{L}^{(k)}, \boldsymbol{U}^{(k)} \in \R^{n_k}$ by maximizing and minimizing the affine bounds of~\eqref{eq:layer-bound} over $\boldsymbol{x} \in \Ball(\boldsymbol{x}_0, \varepsilon)$. The upper bound $\boldsymbol{U}^{(k)}$ (and symmetrically the lower bound $\boldsymbol{L}^{(k)}$) is derived as
\begin{equation}\label{eq:concretize}
\boldsymbol{U}^{(k)}=\max_{\boldsymbol{x} \in \Ball(\boldsymbol{x}_0, \varepsilon)}\left (\uA^{(k)} \boldsymbol{x} + \ud^{(k)}\right) = \uA^{(k)}_+(\boldsymbol{x}_0 + \varepsilon\onevec) + \uA^{(k)}_-(\boldsymbol{x}_0 - \varepsilon\onevec) + \ud^{(k)},
\end{equation}
where $\onevec \in \R^{n_0}$ is the all-ones vector and 
\begin{equation}\label{eq:plusminus}
\uA^{(k)}_+=\max(\uA^{(k)},0),\qquad \uA^{(k)}_-=\min(\uA^{(k)},0).
\end{equation}

\section{\tool Framework}
\label{sec:panda}
This section presents \tool, our commit-and-prove ZKP for certifying local robustness of neural networks. We describe the steps used by $\prover$ to convince $\verifier$ of a local robustness claim in zero knowledge. The \tool algorithm is provided in full detail with pseudocode in Appendix~\ref{app:panda-details}.

\parhead{Setup}
$\prover$ and $\verifier$ agree on the input point $\boldsymbol{x}_0 \in \R^{n_0}$, the perturbation bound $\varepsilon$, the model architecture, and a public matrix $\mathbf{C}$ and vector $\boldsymbol{u}$ which define the local robustness claim of~\eqref{eq:generalized-robust},
\[
  \forall \boldsymbol{x} \in \Ball(\boldsymbol{x}_0, \varepsilon), \qquad \mathbf{C} f(\boldsymbol{x}) \le \boldsymbol{u}.
\]

\parhead{Initial pass}
$\prover$ runs quantized \base on input $(\boldsymbol{x}_0, \varepsilon)$, producing a \textit{transcript} for each layer $k$:
\begin{enumerate}
    \item[(a)] linear bounds $\lA^{(k)}, \uA^{(k)}, \ld^{(k)}, \ud^{(k)}$ of~\eqref{eq:layer-bound},
    \item[(b)] all intermediate states $(\mathbf{A},\boldsymbol{d})$ derived in the backward pass of \eqref{eq:Ih} and \eqref{eq:Iz},
    \item[(c)] signed component matrices from equations~\eqref{eq:relax-update} and~\eqref{eq:concretize},
    \item[(d)] concretized scalar bounds $\boldsymbol{L}^{(k)}$ and $\boldsymbol{U}^{(k)}$ of~\eqref{eq:concretize},
    \item[(e)] per-neuron linear relaxation coefficients $\bua^{(k)}, \bubof^{(k)}, \bla^{(k)}, \blbof^{(k)}$ of~\eqref{eq:relax}, and
    \item[(f)] quantization scaling factors $S_x$ for each value $x$ in the transcript.
\end{enumerate}

\parhead{Commit}
$\prover$ calls $\Picom.\textsc{Commit}$ to commit to the model parameters and all quantities in the \base transcript.

\parhead{Prove} The ZKP private witness $w$ contains the model parameters $\{\mathbf{W}^{(k)}, \boldsymbol{b}^{(k)}\}_{k=1}^{m}$ and the \base transcript. The public statement $x$ contains all commitments to the witness values together with the robustness claim $(\boldsymbol{x}_0, \varepsilon, \mathbf{C}, \boldsymbol{u})$ and the model architecture. 

For each operation performed in the initial pass of quantized \base, $\prover$ produces a proof $\pi$ using one of two underlying ZKPs: $\Piarith$ for matrix arithmetic and $\Pilookup$ for table lookups.
\begin{itemize}
   \item $\Piarith$ is used for $\prover$ to prove correctness of equations~\eqref{eq:Ih},\eqref{eq:Iz},\eqref{eq:relax-update},\eqref{eq:linear-update}, and \eqref{eq:concretize},

    \item $\Pilookup$ is used for $\prover$ to prove correctness of equations~\eqref{eq:back-plusminus} and~\eqref{eq:plusminus}.
 
\end{itemize}

\parhead{Proof of activation relaxations}
Existing primitives $\Piarith$ and $\Pilookup$ are insufficient to prove equation~\eqref{eq:relax}. In particular, $\prover$ must prove at each neuron that its committed scalars $\ua,\ubof,\la,\lbof$ satisfy 
\[
\la z+\lbof\le \sigma(z)\le\ua z+\ubof
\]
over a \textit{real-valued interval} $[l,u]$. While pointwise checks are feasible using $\Pilookup$, such checks do
not provide soundness guarantees between points. We therefore create a novel component to prove such equations, the \textit{Four-Point Relaxation Gadget}, which is the subject of Section~\ref{sec:gadget}.

\parhead{Proof of rescaling} All quantization operations must also be associated with proofs so that $\verifier$ can check $\prover$ does not deviate from the protocol. As discussed in Section~\ref{sec:pre}, a value $x$ is represented as $q_x\approx x\cdot S_x$ where $q_x$ is the quantized integer and $S_x$ is a scaling factor, also an integer. Rescaling to a new scaling factor $S_z$ requires computing $q_z=\left\lfloor S_z q_x/S_x\right\rfloor$. $\prover$ retains the remainder $r\in [0,S_x)$ of the rounding operation as an auxiliary witness, and proves that $0\leq r<S_x$ using $\Pilookup$. Finally, $\prover$ proves the identity
\begin{equation}
S_z\cdot q_x
= S_x\cdot q_z+r
\end{equation}
using $\Piarith$, which asserts the correctness of the rounding operation.

\parhead{Full protocol} The pseudocode of the \tool algorithm is provided in Appendix~\ref{app:panda-details}. The completeness, soundness, and zero-knowledge properties of \tool follow directly from the respective properties of the underlying primitives $\Picom,\Piarith$, and $\Pilookup$.

\parhead{Asymptotic runtime} The runtime of \base and \tool are both polynomial in the number of neurons in the network. For an $m$-layer neural network with $n$ neurons per layer, \base runs in time $O(m^2n^3)$ (see Appendix~\ref{app:crown-details}). \tool inherits the same runtime, since we can instantiate the protocol with ZKP systems and a commitment scheme that require linear prover time. This enables \tool to scale to significantly larger neural networks than prior works.
\section{The Four-Point Relaxation Gadget}
\label{sec:gadget}
A core challenge in converting \base to a ZKP system is verifying that per-neuron linear relaxations in~\eqref{eq:relax} are valid over a real interval. Our construction is designed such that $\verifier$ can check pointwise inequalities at only \textit{four points} in order to determine the validity of a linear relaxation. Throughout this section, we focus on a single neuron with pre-activation bounds $l \le u$ and fix an activation function $\sigma.$

\begin{definition}[Linear relaxation of an activation function] A linear relaxation for $\sigma$ over an interval $[l,u]$ is a pair of affine functions
\begin{equation}\label{eq:gadget:envelope}
  h_L(\preact) = \la \preact + \lbof, \qquad h_U(\preact) = \ua \preact + \ubof,
\end{equation}
satisfying
\begin{equation}\label{eq:gadget:validity}
  h_L(\preact) \;\le\; \sigma(\preact) \;\le\; h_U(\preact), \qquad \forall \preact \in [l, u].
\end{equation}
\end{definition}

\begin{figure}[t]
  \centering
  \begin{minipage}[t]{0.48\textwidth}
    \centering
    \includegraphics[width=\linewidth]{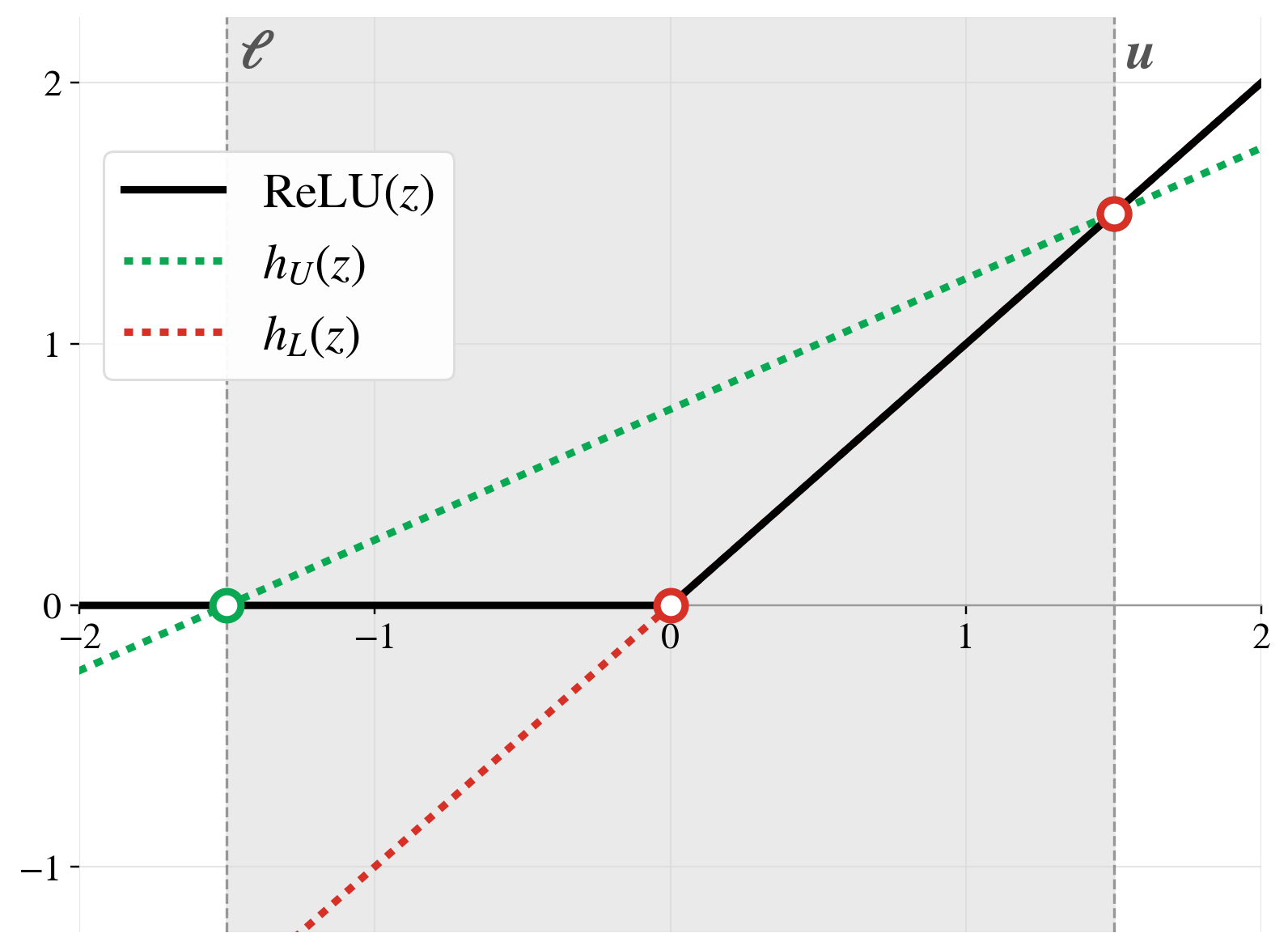}
  \end{minipage}
  \hfill
  \begin{minipage}[t]{0.48\textwidth}
    \centering
    \includegraphics[width=\linewidth]{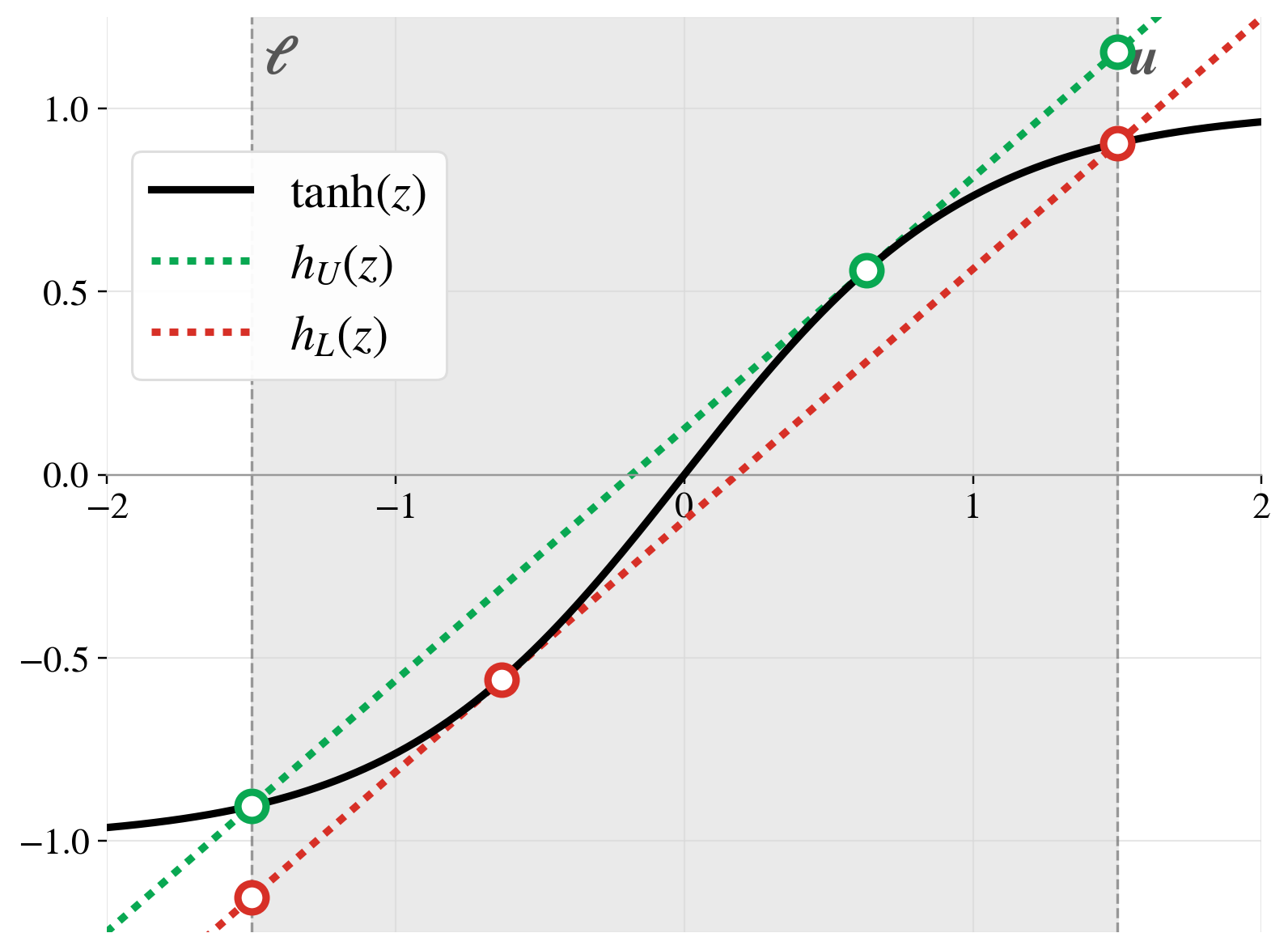}
  \end{minipage}
  \caption{Linear relaxations of the ReLU (left) and $\tanh$ (right) activation functions over the interval $[l, u]$ (shaded). Each activation $\sigma(z)$ (black) is bounded by affine functions $h_U(z)$ (dashed green) and $h_L(z)$ (dashed red) over the interval $[l, u]$.}
  \label{fig:linear-relaxations}
\end{figure}
Figure~\ref{fig:linear-relaxations} depicts examples of linear relaxations of the ReLU and $\tanh$ functions. 

We design a system such that $\verifier$ can verify whether~\eqref{eq:gadget:validity} holds by checking only a constant number of pointwise inequalities. Observe that verifying~\eqref{eq:gadget:validity} over all possible $z\in [l,u]$ is impossible (there are infinitely many such $z$), whereas checking~\eqref{eq:gadget:validity} at finitely many sample points does not guarantee that the equation holds between these sample points. 

In Section~\ref{sec:gadget:sshape}, we focus on S-shaped activation functions such as sigmoid $\left(\sigma(x)=\frac{e^x}{e^x+1}\right)$, tanh $\left(\sigma(x)=\frac{e^x-e^{-x}}{e^x+e^{-x}}\right)$, and arctan $\left(\sigma(x)=\tan^{-1}(x)\right)$, which arise naturally in ML applications. In Section~\ref{sec:gadget:relu}, we focus on ReLU activations $\left(\sigma(x)=\max(0,x)\right)$.

\subsection{S-shaped Activations}
\label{sec:gadget:sshape}
We first focus on S-shaped activations and we show that linear relaxation validity reduces to checks at only \textit{four} points per neuron, independent of interval width. 

\begin{assumption}[S-shaped activation]\label{ass:sshape}
$\sigma: \R \to \R$ is $C^1$, monotone increasing, bounded, and has a single inflection point at $\preact = 0$.
\end{assumption}

Sigmoid, tanh, and arctan satisfy Assumption~\ref{ass:sshape}. The assumption implies that $\sigma$ is convex on $(-\infty,0]$ and concave on $[0,\infty)$, and that the derivative $\sigma'$ has a finite maximum $B \in \R$, attained at $\preact = 0$. It suffices to consider linear relaxation slopes $\la, \ua \in (0, B]$.

\begin{lemma}\label{lemma:two_tangency_points}
Let $\sigma$ satisfy Assumption~\ref{ass:sshape}. For any $\alpha \in (0, B]$, there exist two tangency points $z_1 \le 0 \le z_2$ such that $\sigma'(z_1) = \sigma'(z_2) = \alpha$. Moreover, these points are distinct if $\alpha < B$, and they coincide at $z_1 = z_2 = 0$ if $\alpha = B$.
\end{lemma}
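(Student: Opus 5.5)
We apply the intermediate value theorem to $\sigma'$ separately on $(-\infty,0]$ and on $[0,\infty)$, using the monotonicity of $\sigma'$ on each half-line that follows from Assumption~\ref{ass:sshape}.

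\textbf{Structure of $\sigma'$.} The assumption says $\sigma$ is convex on $(-\infty,0]$ and concave on $[0,\infty)$. Since $\sigma$ is $C^1$, this means $\sigma'$ is continuous, nondecreasing on $(-\infty,0]$, and nonincreasing on $[0,\infty)$. In particular $\sigma'(0)=B$ is the maximum, and $\sigma'\ge 0$ everywhere because $\sigma$ is monotone increasing.

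\textbf{Limits of $\sigma'$.} We next show $\sigma'(z)\to 0$ as $z\to\pm\infty$. On $[0,\infty)$, $\sigma'$ is nonincreasing and nonnegative, so it has a limit $c\ge 0$. If $c>0$, then $\sigma(z)\ge \sigma(0)+cz\to\infty$, contradicting boundedness; hence $c=0$. The argument as $z\to-\infty$ is symmetric, since $\sigma(z)\le\sigma(0)+c'z\to-\infty$ would again contradict boundedness. This is the only step that uses boundedness, and it is the main point to get right. If only continuity of $\sigma'$ is available rather than monotonicity, one would instead note that $\int_0^\infty\sigma'=\lim\sigma-\sigma(0)<\infty$. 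Monotonicity from concavity makes the limit argument clean.

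\textbf{Existence and distinctness.} Fix $\alpha\in(0,B]$. On $[0,\infty)$, the continuous function $\sigma'$ equals $B\ge\alpha$ at $0$ and tends to $0<\alpha$. By the intermediate value theorem there is $z_2\ge 0$ with $\sigma'(z_2)=\alpha$. Symmetrically, on $(-\infty,0]$ there is $z_1\le 0$ with $\sigma'(z_1)=\alpha$.

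If $\alpha<B$, then $\sigma'(0)=B\neq\alpha$. This forces $z_1<0<z_2$, so the two points are distinct.

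If $\alpha=B$, then $z_1=z_2=0$ is a valid choice. If the statement is read as claiming the points are forced to be $0$, that would require $0$ to be the unique maximizer of $\sigma'$. We would justify this from the single-inflection-point hypothesis: if $\sigma'=B$ on an interval $[0,\delta]$, then $\sigma$ would be affine there and the inflection point would not be isolated. Under the lemma's existence phrasing, the choice $z_1=z_2=0$ already suffices.
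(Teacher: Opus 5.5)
Your proof is correct and takes the same route as the paper. Both get monotonicity of $\sigma'$ on each half-line from convexity and concavity, get $\sigma'\to 0$ at $\pm\infty$ from boundedness, and then apply the intermediate value theorem on each side of $0$. You are more careful than the paper in two places: it simply asserts strict monotonicity of $\sigma'$, reading this off the single-inflection-point hypothesis (which also yields the uniqueness of each preimage it later cites as ``at most two preimages''), and it states the limits at infinity without your monotone-plus-bounded argument.
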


See proof in Appendix~\ref{proof:two_tangency_points}. We refer to the classical Interior Extremum Theorem from calculus in order to provide a simple set of criteria for the validity of candidate linear relaxations. The proof of the lemma below follows directly from applying this theorem to $h_U(z)-\sigma(z).$ 
\begin{lemma}[Four-point soundness]\label{lemma:fourpoint}
Let $\sigma$ satisfy Assumption~\ref{ass:sshape}, and fix an interval $[l, u] \subseteq \R$. The upper bound $h_U(\preact) = \ua \preact + \ubof$ with $\ua > 0$ satisfies $\sigma(\preact) \le h_U(\preact)$ for all $\preact \in [l, u]$ if and only if
\begin{enumerate}
  \item[(i)] \emph{Endpoints:} $\sigma(l) \le h_U(l)$ and $\sigma(u) \le h_U(u)$;
  \item[(ii)] \emph{Tangency points:} for every $\preact \in [l, u]$ with $\sigma'(\preact) = \ua$, $\sigma(\preact) \le h_U(\preact)$.
\end{enumerate}
The dual statement holds for the lower bound with the inequality directions reversed.
\end{lemma}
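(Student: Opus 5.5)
The plan is to study the gap function $g(\preact) = h_U(\preact) - \sigma(\preact) = \ua\preact + \ubof - \sigma(\preact)$ on the compact interval $[l,u]$. Under Assumption~\ref{ass:sshape}, $\sigma$ is $C^1$, so $g$ is $C^1$ on $[l,u]$. The claim $\sigma \le h_U$ on $[l,u]$ is exactly the statement $\min_{[l,u]} g \ge 0$.

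\emph{Necessity.} This direction is immediate. If $g \ge 0$ everywhere on $[l,u]$, then in particular $g \ge 0$ at $l$, at $u$, and at every $\preact \in [l,u]$ with $\sigma'(\preact) = \ua$. This gives conditions (i) and (ii).

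\emph{Sufficiency.} Since $g$ is continuous on the compact set $[l,u]$, it attains its minimum at some $\preact^* \in [l,u]$; this is the Extreme Value Theorem. There are two cases.
\begin{itemize}
  \item If $\preact^* \in \{l,u\}$, then condition (i) gives $g(\preact^*) \ge 0$.
  \item If $\preact^* \in (l,u)$, then the Interior Extremum Theorem (Fermat) gives $g'(\preact^*) = 0$, i.e.\ $\sigma'(\preact^*) = \ua$. Hence $\preact^*$ is one of the tangency points in (ii), and so $g(\preact^*) \ge 0$.
\end{itemize}
In either case $\min_{[l,u]} g \ge 0$, which proves the upper-bound statement. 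The degenerate case $l = u$ is absorbed by (i).

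For the lower bound we apply the same argument to $\sigma - h_L$, with the inequalities reversed.

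To justify the name ``four-point'', we add a remark using Lemma~\ref{lemma:two_tangency_points}. When $\ua \in (0,B]$, the set $\{\preact : \sigma'(\preact) = \ua\}$ consists of at most the two points $z_1 \le 0 \le z_2$. This holds because $\sigma'$ is strictly increasing on $(-\infty,0]$ and strictly decreasing on $[0,\infty)$. We will state this strict monotonicity as a consequence of the single inflection point and rely on Lemma~\ref{lemma:two_tangency_points}. Consequently, (ii) reduces to checking at most two points, namely those of $z_1, z_2$ that lie in $[l,u]$, giving at most four checks in total. When $\ua > B$ there are no tangency points, so (ii) is vacuous.

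The only step needing care is this remark: if $\sigma'$ were constant on some subinterval, the tangency set could be infinite. The equivalence itself does not depend on this, since (ii) quantifies over all tangency points, so the main argument is just Fermat's theorem plus compactness.
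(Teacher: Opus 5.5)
Your proof is correct and is essentially the paper's argument: the paper also obtains the lemma by applying the Interior Extremum Theorem to $h_U(z)-\sigma(z)$ on $[l,u]$, as it does in its appendix proof of Theorem~\ref{thm:casefreefourpoint}, and the necessity direction is trivial. Your explicit use of the Extreme Value Theorem is a valid refinement, and so is your caveat that strict monotonicity of $\sigma'$ matters only for the ``at most four points'' count and not for the equivalence itself.
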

By Lemma~\ref{lemma:two_tangency_points}, $\sigma'$ has at most two preimages of $\ua$, so it suffices to verify the values of $\sigma$ and $h_U$ at no more than four points: the two endpoints $l, u$ and the two tangency points $z_1, z_2$.

\parhead{The case-work obstacle}
Encoding Lemma~\ref{lemma:fourpoint} directly into a ZKP requires case-work at each tangency point $z_i$ for $i\in \{1,2\}$: there must be a range proof asserting that either (a) $z_i \le l$, (b) $z_i \ge u$, or (c) $\sigma(z_i) \le h_U(z_i)$. Declaring which case holds for each $z_i$, however, leaks information about $l$ and $u$, which are themselves derived from the private model parameters. We therefore replace Lemma~\ref{lemma:fourpoint} with a \emph{case-free}, sufficient (but not necessary) set of criteria for determining whether $h_U$ and $h_L$ are a valid linear relaxation over the interval $[l,u].$

\begin{theorem}[Case-free four-point soundness]\label{thm:casefreefourpoint}
Let $\sigma$ satisfy Assumption~\ref{ass:sshape} and fix an interval $[l, u] \subseteq \R$. Let $z_1 \le 0 \le z_2$ be the (possibly coinciding) tangency points satisfying $\sigma'(z_1) = \sigma'(z_2) = \ua$. The upper bound $h_U$ is valid on $[l,u]$ if each of the following criteria is satisfied.
\begin{itemize}
  \item[(i)] \emph{Endpoints:} $\sigma(l) \le h_U(l)$ and $\sigma(u) \le h_U(u)$;
  \item[(ii)] \emph{Left tangency:} $\sigma(z_1) \le h_U(z_1)$;
  \item[(iii)] \emph{Right tangency:} $0 \le (u - z_2)\bigl(h_U(z_2) - \sigma(z_2)\bigr)$;
\end{itemize}
 A symmetric statement holds for the lower bound $h_L$ and is given in Appendix~\ref{proof:fourpoint}. Moreover, the linear relaxations selected by \base (given in Appendix~\ref{app:crown-details:relaxation}) satisfy these criteria.
\end{theorem}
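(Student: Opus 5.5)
The plan is to reduce the statement to Lemma~\ref{lemma:fourpoint}. That lemma says it suffices to check the endpoints and every tangency point $\preact\in[l,u]$ with $\sigma'(\preact)=\ua$. We assume $\ua\in(0,B]$, as in the discussion after Assumption~\ref{ass:sshape}. By Lemma~\ref{lemma:two_tangency_points} the only such points are $z_1\le 0\le z_2$. Condition (i) of Lemma~\ref{lemma:fourpoint} is exactly criterion (i). So the work is to show that, for each $z_i$ lying in $[l,u]$, criteria (ii) and (iii) force $\sigma(z_i)\le h_U(z_i)$.

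For $z_1$ this is immediate from (ii), since (ii) asserts the inequality with no case split. For $z_2$ we split on the sign of $u-z_2$.
\begin{itemize}
\item If $z_2\in[l,u)$, then $u-z_2>0$, and (iii) gives $h_U(z_2)-\sigma(z_2)\ge 0$.
\item If $z_2>u$, then $z_2\notin[l,u]$ and there is nothing to check.
\item If $z_2=u$, the required inequality is the endpoint condition (i) at $u$.
\end{itemize}
In every case the hypotheses of Lemma~\ref{lemma:fourpoint} hold, so $h_U$ is valid. The lower bound is handled the same way. Reflect the roles: the convex side is now $[0,\infty)$ and the concave side $(-\infty,0]$, the relevant tangency is at $\la$, the unconditional check is at $z_2$, and the guarded check is $0\le(z_1-l)\bigl(\sigma(z_1)-h_L(z_1)\bigr)$, with the case $z_1=l$ absorbed by the endpoint check at $l$.

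The second claim is that \base's relaxations satisfy the criteria. Validity of \base's bounds on $[l,u]$ gives (i) directly, and it also gives the inequality at any tangency point that lies inside $[l,u]$. The remaining work is at tangency points outside the interval.

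\begin{itemize}
\item \textbf{Right tangency outside.} If $z_2\notin[l,u]$, either $z_2>u$, and then $(u-z_2)(\cdot)$ is nonnegative precisely when $h_U(z_2)\le\sigma(z_2)$; or $z_2<l$. So we must show \base's upper line is not strictly above $\sigma$ at a right-side tangency beyond $u$. We handle this by the shape of \base's choices in Appendix~\ref{app:crown-details:relaxation}:
\begin{itemize}
\item When $l\ge 0$, the upper bound is a tangent on the concave region, so $h_U\ge\sigma$ on $[0,\infty)$ and the tangency is $z_2$ itself, giving equality.
\item When $u\le 0$, the upper bound is the chord on the convex region. Its slope equals $\sigma'(\xi)$ for some $\xi\in[l,u]$, so $z_1\in[l,u]$ and the inequality there follows from validity. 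Since $z_2\ge 0\ge u$, a concave-side argument shows $h_U(z_2)\le\sigma(z_2)$: the chord lies below $\sigma$ to the right of $u$ by convexity up to $0$, and beyond $0$ by concavity combined with the tangent slope match.
\item When $l<0<u$, the upper bound is a tangent line from $l$ touching at some $d\ge 0$, so $z_2=d\in[0,u]$ and validity applies.
\end{itemize}
\item \textbf{Left tangency outside.} Criterion (ii) at $z_1$ when $z_1<l$ requires $h_U$ to lie above $\sigma$ far to the left. This holds in the tangent cases because $\sigma$ is convex on $(-\infty,0]$ and lies below any line of slope $\sigma'(z_1)$ that dominates it near $0$. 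Concretely, compare $h_U-\sigma$ on $(-\infty,0]$: its derivative $\ua-\sigma'$ is $\le 0$ on $(-\infty,z_1]$, so $h_U-\sigma$ is minimized over $[z_1,l]$ at $l$, where it is nonnegative.
\end{itemize}

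The main obstacle is this case analysis of \base's specific relaxations, especially establishing the sign conditions at tangency points lying outside $[l,u]$. The soundness direction itself is essentially bookkeeping on top of Lemma~\ref{lemma:fourpoint}.
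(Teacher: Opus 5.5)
Your soundness argument matches the paper's. Both apply the Interior Extremum Theorem to $h_U-\sigma$ (your route through Lemma~\ref{lemma:fourpoint} is the same thing), and both split $z_2<u$, $z_2=u$, $z_2>u$ the same way. Your reflected lower-bound criteria also agree with the paper's.

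The gap is in the second claim, that \base's relaxations satisfy the criteria. For $l<0<u$ you describe \base's upper bound as ``a tangent line from $l$ touching at some $d\ge 0$, so $z_2=d\in[0,u]$.'' That is not the procedure of Appendix~\ref{app:crown-details:relaxation}.
\begin{itemize}
\item The tangent-from-$l$ line is used only when its touching point $\tau$ lies in $(l,u)$ (Case~1).
\item When $\tau\ge u$ (Case~3), \base outputs the endpoint chord, and this also happens on mixed-sign intervals. The right panel of Figure~\ref{fig:four-point-soundness}, with $[l,u]=[-1.5,0.2]$, is an example: there $z_2>u$ and $\sigma(z_2)>h_U(z_2)$.
\end{itemize}
In that configuration neither ``$z_2\in[0,u]$'' nor ``validity applies'' is available. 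So criterion~(iii) is never checked in exactly the case its product form was designed for.

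The missing step is short. For the chord, $h_U(u)=\sigma(u)$. Since $z_1\le 0<u$ and $\sigma'\ge\ua$ on $[z_1,z_2]$, the function $h_U-\sigma$ is nonincreasing on $[u,z_2]$. Hence $h_U(z_2)\le\sigma(z_2)$. The paper also shows $z_2\ge\tau\ge u$: $\tau$ maximizes the secant slope from $l$, so $\sigma'(\tau)\ge\ua=\sigma'(z_2)$. Therefore $u-z_2\le 0$ and (iii) holds. Your $u\le 0$ chord argument is essentially this and carries over.

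Two smaller points.
\begin{itemize}
\item You get (i) and the in-interval tangency inequalities from ``validity of \base's bounds.'' Within this paper that is circular: Appendix~\ref{app:crown-details:relaxation} defers soundness of the selection to this very proof. Validity is also not automatic; the endpoint chord on a mixed-sign interval is not an upper bound when $\tau\in(l,u)$. The paper instead checks (i)--(iii) directly in each $\tau$-case. It uses endpoint equalities or concavity for (i), $h_U(z_2)=\sigma(z_2)$ or the sign argument above for (iii), and an integral comparison of $\sigma'$ with $\ua$ for (ii).
\item In your left-tangency step the sign is reversed: on $(-\infty,z_1]$ you have $\ua-\sigma'\ge 0$. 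What you need is $\ua-\sigma'\le 0$ on $[z_1,z_2]\supseteq[z_1,l]$. This holds because $l\le z_2$ in every \base case, which also rules out the sub-case $z_2<l$ that you list but never dismiss.
\end{itemize}
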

See proof in Appendix~\ref{proof:fourpoint}. Figure~\ref{fig:four-point-soundness} provides an example of upper bounds $h_U$ for both cases $z_2\le u$ and $z_2>u.$

\begin{figure}[t]
  \centering
  \begin{minipage}[t]{0.48\textwidth}
    \centering
    \includegraphics[width=\linewidth]{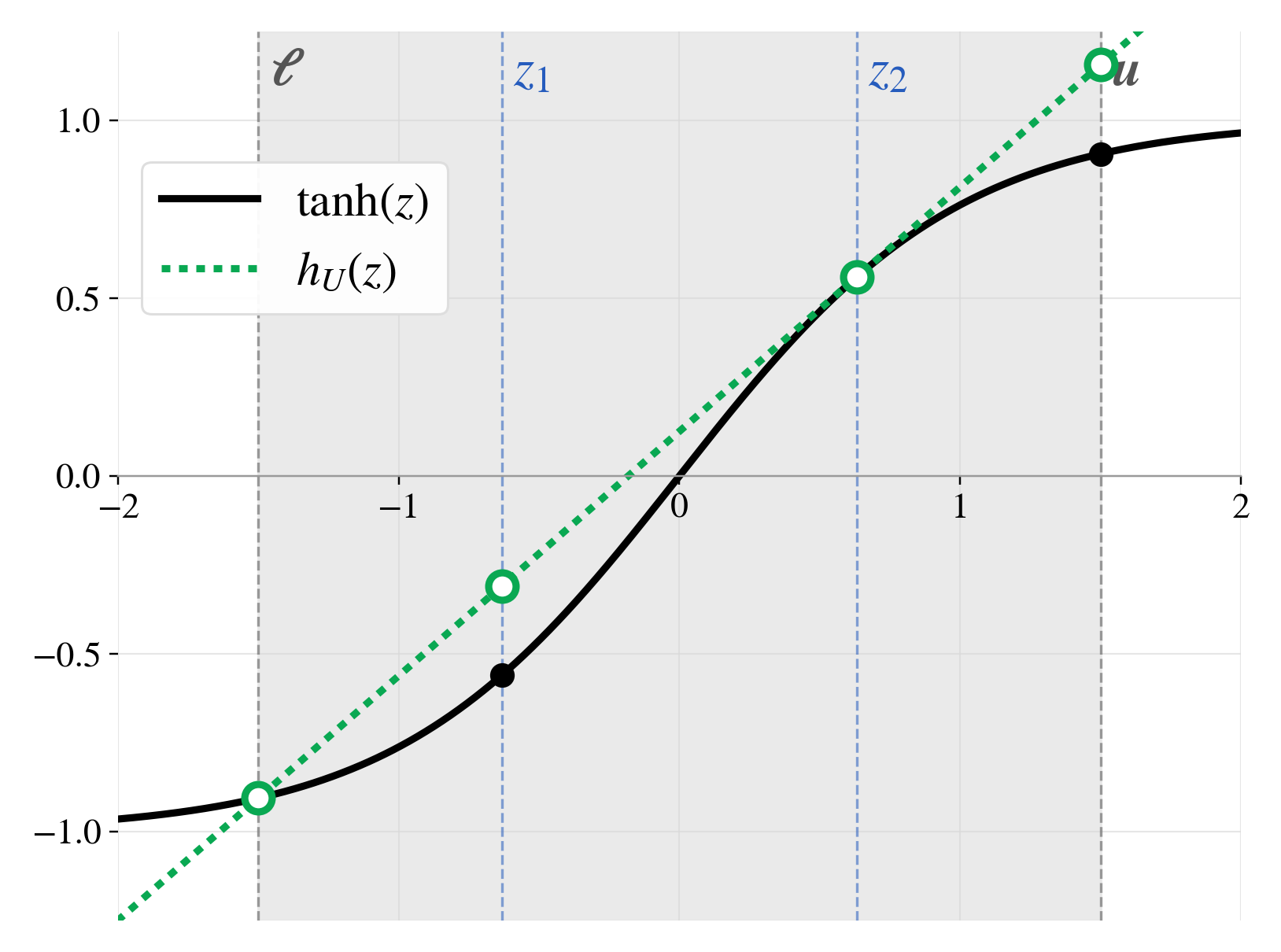}
  \end{minipage}
  \hfill
  \begin{minipage}[t]{0.48\textwidth}
    \centering
    \includegraphics[width=\linewidth]{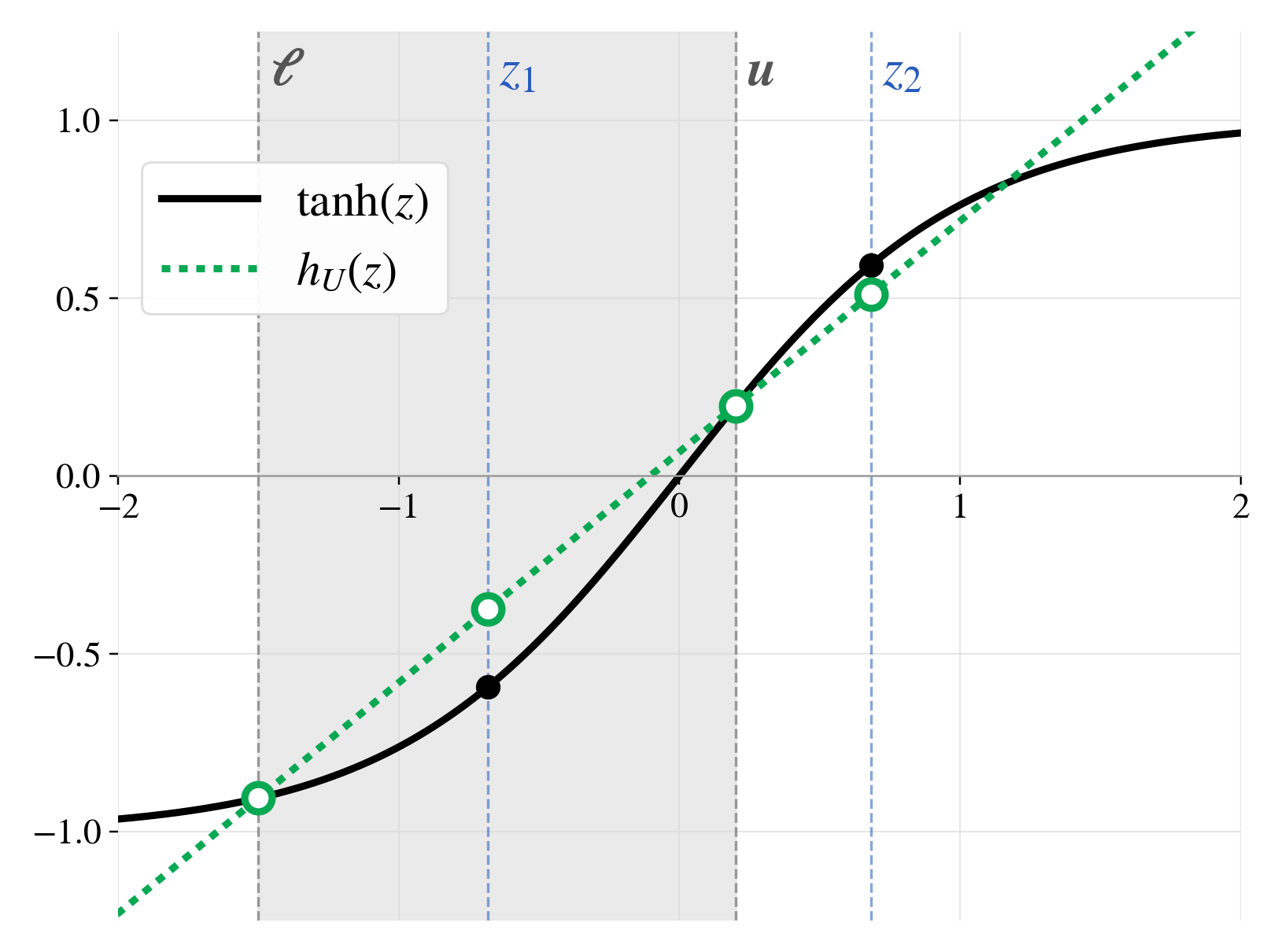}
  \end{minipage}
  \caption{Case-free four-point soundness (Theorem~\ref{thm:casefreefourpoint}) for the activation function $\sigma(z)=\tanh(z).$  The left image depicts $[l,u]=[-1.5,1.5]$, $z_2\le u$, and $\sigma(z_2)\le h_U(z_2)$. The right image depicts $[l,u]=[-1.5,0.2]$, $z_2> u$, and $\sigma(z_2)>h_U(z_2).$ Both cases therefore satisfy the criteria given by Theorem~\ref{thm:casefreefourpoint} without requiring case-specific analysis.}
  \label{fig:four-point-soundness}
\end{figure}

\parhead{ZKP Algorithm}
We present the Four-Point Relaxation Gadget (formal algorithm in Appendix~\ref{app:panda-details}) as a simple set of criteria which $\prover$ proves and $\verifier$ verifies. We encode Theorem~\ref{thm:casefreefourpoint} as a set of ZKP constraints, which invoke $\Piarith$ and $\Pilookup$ ZKP systems as subroutines. 

For each neuron, and for the upper bound case, $\prover$ takes as private witness the interval endpoints $(l,u)$, the upper bound $(\ua,\ubof)$, and the tangency points $(z_1,z_2)$. $\prover$ evaluates $\sigma$ at the witness points and proves the correctness of these evaluations using lookup tables $T_\sigma$ and $T_{\sigma'}$. $\prover$ then proves the conditions of Theorem~\ref{thm:casefreefourpoint} are met. Algorithm~\ref{alg:gadget} of Appendix~\ref{app:panda-details} formally presents the prover for the upper-bound case, and the lower-bound case is analogous. $\verifier$ checks the correctness of all proofs.

\subsection{ReLU Activations}
\label{sec:gadget:relu}

For ReLU, $\sigma(\preact) = \max(0, \preact)$ is piecewise linear and convex. \base selects lower and upper bounds $(h_L,h_U)$ for an interval $[l,u]$ as follows:
\begin{itemize}
    \item \textbf{Upper bound $h_U$:} select the secant line connecting $(l,\sigma(l))$ and $(u,\sigma(u))$, given by
    \[
    h_U(z)=\frac{\sigma(u)-\sigma(l)}{u-l}(z-l)+\sigma(l).
    \]
    \item \textbf{Lower bound $h_L:$} select $h_L(z)=\la z$ for any $\la\in\{0,1\}.$ Either choice of $a$ guarantees a valid linear relaxation. \citet{crown} recommends selecting $\la=\mathbf{1}\left[|u| \ge |l|\right].$ 
\end{itemize}
We design a simple algorithm for proving the validity of such a linear relaxation $(h_L,h_U).$
\begin{itemize}
    \item \textbf{Upper bound $h_U$:} prove $\sigma(l)=h_U(l)$ and $\sigma(u)=h_U(u).$
    \item \textbf{Lower bound $h_L$:} prove $\la(\la-1)=0$ (i.e. $\la\in\{0,1\}$) and intercept $\lbof=0.$
\end{itemize}
Each of these conditions is proven with $\Piarith.$ For further generality, $\la$ can be selected within the closed interval $[0,1]$ and $\Pilookup$ can be used for the proof, thus supporting $\alpha$-CROWN \cite{alphacrown}.
\section{Implementation and Evaluations}
\label{sec:evaluation}

We implement \tool in Rust, using the arkworks ecosystem~\cite{arkworks} for ZKP programming. We use Hyrax~\cite{hyrax} over the elliptic curve BN254 for $\Picom$, the protocol of~\citet{thaler13} for $\Piarith$, and LogUp-GKR~\cite{logup-GKR} for $\Pilookup$. Our implementation uses the non-zero-knowledge variants of these protocols for simplicity, but we estimate that the use of zero-knowledge variants would increase the prover time by only a small constant factor (e.g., see~\cite{hypernova,vega}). In our experiments, we use AMD EPYC 9575F processors. Each job evaluates one model and runs on one core with 72 GiB of maximum allocated memory.

Our implementation is available at \url{https://github.com/youweizhong/PANDA}.

In our evaluations, we answer the following questions:

\parhead{Q1: Scalability} How does \tool scale to large networks?

\parhead{Q2: Support of activation functions} How does \tool perform on activations other than ReLU?

\parhead{Q3: Fidelity} Is there any mismatch between certification results for \tool and \base? 

\parhead{Q4: Comparison with \base} How much is the cost paid for privacy? 

\parhead{Q5: Comparison with previous work} How does \tool perform compared with previous work?

\parhead{Benchmark suites} We evaluate \tool on the MNIST image classification models~\cite{lecun-mnist-1998} from the \base evaluation. In addition, we evaluate \tool on two application-focused benchmarks: SafeNLP~\cite{safenlp} from VNN-COMP 2025~\cite{vnncomp2025} includes medical models that classify medical queries as high or low risk, and robot-detection models that classify text as originating from a human or robot. LunarLander~\cite{lunarlander} from VNN-COMP 2022~\cite{vnncomp2022} classifies safe actions to be taken by lunar robots. Finally, we evaluate on the only publicly available benchmark from FairProof's suite~\cite{fairproof}. 

\parhead{Evaluations} We run 100 tests per model, except for FairProof (one test is run) and MNIST (the 100 images are filtered to those correctly classified by the model). Only images which \base first certifies are tested with \tool. We use 8 bits of quantization for FairProof and 14 bits for all other evaluations. Our results are summarized in Table~\ref{final_least_table}, with full benchmarks in Table~\ref{complete_final_least_table} and a component-wise breakdown of prover time in Appendix~\ref{prover_breakdown}.

\begin{table}
  \caption{Summary of results. Each row is one model. We denote an $m$-layer feed-forward network with $n$ neurons per layer by $m\times [n].$ Prove and verify times are reported as mean and standard deviation. FairProof Adult is a two-layer network with 8 and 2 neurons, denoted $[8,2]$.}
  \label{final_least_table}
  \centering
  \small
  \setlength{\tabcolsep}{3.5pt}
  \begin{tabular}{@{}lllS[separate-uncertainty,table-format=3.2(3.1)]S[separate-uncertainty,table-format=2.2(2.1)]rrr@{}}
    \toprule
    Dataset & Structure & Act. & {Prove (s)} & {Verify (s)} & {Proof} & {\base} & {Drift}\\
            &           &      &             &              & {(MB)} & {(s)} & {(\%)}\\
    \midrule
    \multicolumn{8}{@{}l}{\textit{MNIST}} \\
     & $2 \times [20]$ & ReLU & 22.81(20) & 0.97(1) & 3 & 1.3e-4 & 0.02 \\
     & $3 \times [1024]$ & ReLU & 101.58(375) & 4.60(13) & 9 & 0.22 & 0.03 \\
     & $4 \times [1024]$ & ReLU & 226.08(1467) & 9.56(48) & 17 & 0.44 & -0.01 \\
     & $4 \times [1024]$ & Sigmoid & 327.47(1136) & 14.13(46) & 31 & 0.47 & 2.48  \\
     & $4 \times [1024]$ & Tanh & 333.94(581) & 14.31(28) & 32 & 0.46 & 6.33 \\
    \addlinespace
    \multicolumn{8}{@{}l}{\textit{Other}} \\
    SafeNLP\,med & $2 \times [128]$ & ReLU & 23.04(19) & 0.81(1) & 3 & 1.3e-4 & -0.05  \\
    SafeNLP\,robot & $2 \times [128]$ & ReLU & 20.33(19) & 0.78(1) & 3 & 1.4e-4 & -0.29 \\
    LunarLander & $3 \times [64]$ & ReLU & 36.61(29) & 1.46(2) & 5 & 8.1e-5 & -0.09 \\
    FairProof\,Adult & $[8, 2]$ & ReLU & 34.89 & 1.21 & 4 & 1.4e-5 & -12.60 \\
    \bottomrule
  \end{tabular}
\end{table}

\parhead{Q1: Scalability} \tool produces proofs of local robustness in under six minutes for the largest $4\times [1024]$ MNIST networks, which have 2.9 million parameters.

\parhead{Q2: Support of activation functions} Relative to ReLU networks, sigmoid/tanh increases proving time by $1.5$--$2.6\times$, with verification time and proof size growing by a similar factor.

\parhead{Q3: Fidelity} The use of quantization in \tool induces errors relative to exact arithmetic in \base, as rounding operations are lossy and lookup tables discretize the real interval. We assess the effect of quantization error empirically, by measuring the drift incurred between \tool and \base, and recording any differences in the respective certification results. Our results show that \tool can verify all claims that \base verifies, and that the quantization drift is limited to small relative differences which can be effectively managed by tuning the quantization precision.

\parhead{Q4: Comparison with \base} The cost of privacy is a prover overhead of roughly $100$--$1{,}000\times$ and a verifier overhead of $20$--$50\times$ over \base on the 1024-neuron models (rising to $10^6\times$ only on toy networks where \base runs in microseconds and ZKP fixed costs dominate).

\parhead{Q5: Comparison with previous work} FairProof is the only suitable direct comparison; it is the only ZKP system that can certify local robustness, to the best of our knowledge. We evaluate FairProof head-to-head with \tool on small benchmarks. We observe that FairProof cannot run on even the smallest network in the MNIST benchmark. Where FairProof runs (on FairProof Adult), \tool wins on prover time, requiring about half as much as FairProof’s prover time, with a feasible verifier cost ($1.21$s) and proof size ($4$MB). FairProof has smaller proof size and verifier time, whereas \tool's advantage is prover feasibility and scale. See Appendix \ref{comparison} for a detailed discussion.
\section{Known Limitations}\label{sec:limitations}
\tool can only prove robustness for quantized models, not for general floating-point models. This limitation is consistent with previous works in zkML~\cite{zkGPT,deepprove}. The precision level specified in the quantization affects the accuracy of values retrieved from the lookup tables used in our non-linear activation functions and the lossiness of rounding operations. Additionally, \tool discloses the model architecture, layer dimensions, and activation function in the public statement, hiding only the model weights and biases. However, DNNs can be padded with dummy layers and weights to hide exact dimensions. 

From the formal verification side, complete verification of local robustness of DNNs is NP-hard~\cite{reluplex}. Algorithms such as \base are \textit{incomplete verifiers}: they over-approximate robustness bounds in polynomial time. They can verify many but not all robustness claims, and crucially they cannot falsify a claim. \tool inherits this property of being an incomplete verifier.
\section{Related Work}

\parhead{Proof of fairness} Fairness is a notion related to local robustness which states that sensitive characteristics such as race and gender should not affect model outputs. FairProof~\citep{fairproof} proposes a framework for verifying local fairness as well as robustness properties using ZKPs. Their approach supports only ReLU activations and enumerates all possible neuron activations in a preprocessing phase. The prover time grows exponentially with the number of neurons in the network, which does not scale well to large models. FairZK by \cite{fairzk} and OATH by \cite{franzese2025} prove a notion of \textit{global fairness} using aggregated statistics; this direction is disjoint from our work and produces weaker bounds. Confidential-PROFITT by \cite{confidentialprofitt} enables privacy-preserving proofs of training for tree-split constraints in decision trees, which are simpler than those required for deep neural networks. A separate line of work, including \cite{fairness_in_the_eyes_of_data,verifiable_fairness}, focuses on verifying global fairness in a third-party auditor setting.

\parhead{Non-cryptographic certification approaches} 
Prior work has developed powerful formal verification techniques for certifying local robustness of DNNs~\cite{reluplex,crown,alphacrown}. However, these approaches operate in a non-cryptographic setting, where the verifier requires access to model parameters. In contrast, \tool certifies such properties in zero knowledge, without revealing model parameters. 

\parhead{Concurrent work} In a concurrent paper, \citet{song2026securecrown} introduce an algorithm called SecureCROWN, which augments the CROWN algorithm to add privacy and verifiability to local robustness claims. They use secure two-party computation (2PC) to jointly compute the CROWN algorithm. As a result, a data owner can hide their query point $\boldsymbol{x}_0$ and perturbation radius $\varepsilon$ while the model owner hides the model. However, their computation trace is not publicly verifiable, and their construction is in the honest-but-curious setting which assumes the server does not deviate from the protocol. The use of 2PC results in high communication overhead between the two parties, and their protocol supports only ReLU activations and not sigmoid or tanh.
\section{Conclusion}
In this paper, we introduced the \tool protocol for confidentially proving and verifying linear properties of model outputs, such as fairness and robustness. We evaluated \tool on leading benchmarks and demonstrated our performance improvements in terms of prover time and scalability. \tool promotes transparency in ML infrastructure by enabling model owners to publicly share verifiable proofs of robustness without revealing model data. We hope this work will encourage greater interest and further research in this direction.
\acksection
This material is based upon work supported in part by the Defense Advanced Research Projects Agency (DARPA) under Agreement No. HR00112590130 and by the NSF awards CCF-2219995, CNS-2245344, and CCF-2318974. Ben Merbaum gratefully acknowledges financial support for this project by the Fulbright Canada Student Program, which is sponsored by the U.S. Department of State and Fulbright Canada. Any opinions, findings, and conclusions or recommendations expressed in this material are those of the authors and do not necessarily reflect the views of the funding agencies. We thank Tianyu Zhang for insightful discussions on implementing customized ZKP systems. We acknowledge the resources provided by the Yale Center for Research Computing.
\bibliography{bib} 
\bibliographystyle{abbrvnat}
\newpage
\appendix
\section{Details of \base}
\label{app:crown-details}

In this section, we provide the full descriptions of the \base algorithm and the \tool framework, with pseudocode algorithms.

\subsection{The \base Algorithm}
\label{app:crown-details:pseudocode}

Algorithm~\ref{alg:crown} gives the full \base procedure. This consists of an outer forward pass over layers $k$, and an inner backward pass from layer $k$ down to the input layer 0. The backward pass repeatedly calls \textsc{Step1}, the activation-relaxation update~\eqref{eq:relax-update}, and \textsc{Step2}, the linear-map substitution~\eqref{eq:linear-update}. Concretization~\eqref{eq:concretize} is applied following each backward pass. For simplicity of notation, we assume relaxation scalars $(\bua^{(\ell)},\bubof^{(\ell)},\bla^{(\ell)},\blbof^{(\ell)})$ for each layer $\ell$ are implicitly passed as input to all procedures below upon generation.

\begin{algorithm}[!htbp]
\caption{The \base algorithm. \textsc{Relax} produces linear relaxations of $\sigma$ (see Section~\ref{sec:gadget}, and the selection algorithm in Appendix~\ref{app:crown-details:relaxation}).}
\label{alg:crown}
\begin{algorithmic}[1]
\Require Network $\{\mathbf{W}^{(k)}, \boldsymbol{b}^{(k)}\}_{k=1}^m$, input $\boldsymbol{x}_0$, perturbation budget $\varepsilon$, robustness claim $(\mathbf{C},\boldsymbol{u})$

\For{$k = 1, \dots, m$} \Comment{outer forward pass}
  \State $(\lA^{(k)}, \ld^{(k)}, \uA^{(k)}, \ud^{(k)}) \gets \textsc{BackwardPass}(k)$
  \State $(\boldsymbol{L}^{(k)}, \boldsymbol{U}^{(k)}) \gets \textsc{Concretize}(\lA^{(k)}, \ld^{(k)}, \uA^{(k)}, \ud^{(k)}, \boldsymbol{x}_0, \varepsilon)$
  \For{each neuron $i$ in layer $k<m$}
    \State $(\ua^{(k)}_i, \ubof^{(k)}_i, \la^{(k)}_i, \lbof^{(k)}_i) \gets \textsc{Relax}(\sigma, L^{(k)}_i, U^{(k)}_i)$
  \EndFor
\EndFor
\State \Return True if $\boldsymbol{U}^{(m)}\le \boldsymbol{u}$ entry-wise, else False otherwise
\Statex
\textit{Upper-bound pass (the lower-bound pass is analogous):}
\Procedure{BackwardPass}{$k$}
\If{$k<m$}
  \State $\mathbf{A} \gets \mathbf{W}^{(k)}, \quad \boldsymbol{d} \gets \boldsymbol{b}^{(k)}$ \Comment{initializes $\mathcal{I}_\postact^{(k-1)}$}
\ElsIf{$k=m$}
 \State $\mathbf{A} \gets \mathbf{C}\mathbf{W}^{(k)}, \quad \boldsymbol{d} \gets \mathbf{C}\boldsymbol{b}^{(k)}$ \Comment{final-layer initialization}
 \EndIf
  \For{$\ell = k-1, \dots, 1$}
    \State $(\mathbf{A}, \boldsymbol{d}) \gets \textsc{Step1}(\mathbf{A}, \boldsymbol{d}, \ell)$ \Comment{$\mathcal{I}_\postact^{(\ell)} \Rightarrow \mathcal{I}_\preact^{(\ell)}$}
    \State $(\mathbf{A}, \boldsymbol{d}) \gets \textsc{Step2}(\mathbf{A}, \boldsymbol{d}, \ell)$ \Comment{$\mathcal{I}_\preact^{(\ell)} \Rightarrow \mathcal{I}_\postact^{(\ell-1)}$}
  \EndFor
  \State \Return $(\mathbf{A}, \boldsymbol{d})$ \Comment{termination at $\mathcal{I}_\postact^{(0)}$}
\EndProcedure
\Statex
\Procedure{Step1}{$\mathbf{A}, \boldsymbol{d}, \ell$} \Comment{activation-relaxation update~\eqref{eq:relax-update}}
  \State $\mathbf{A}_+ \gets \max(\mathbf{A}, 0)$, \quad $\mathbf{A}_- \gets \min(\mathbf{A}, 0)$ 
  \State $\boldsymbol{d} \gets \boldsymbol{d} + \mathbf{A}_+\, \bubof^{(\ell)} + \mathbf{A}_-\, \blbof^{(\ell)}$
  \State $\mathbf{A} \gets \mathbf{A}_+\, \mathrm{diag}(\bua^{(\ell)}) + \mathbf{A}_-\, \mathrm{diag}(\bla^{(\ell)})$
  \State \Return $(\mathbf{A}, \boldsymbol{d})$
\EndProcedure
\Statex
\Procedure{Step2}{$\mathbf{A}, \boldsymbol{d}, \ell$} \Comment{linear-map substitution~\eqref{eq:linear-update}}
  \State $\boldsymbol{d} \gets \boldsymbol{d} + \mathbf{A}\, \boldsymbol{b}^{(\ell)}$
  \State $\mathbf{A} \gets \mathbf{A}\, \mathbf{W}^{(\ell)}$
  \State \Return $(\mathbf{A}, \boldsymbol{d})$
\EndProcedure
\Statex
\Procedure{Concretize}{$\lA, \ld, \uA, \ud, \boldsymbol{x}_0, \varepsilon$} 
  \State $\uA_+ \gets \max(\uA, 0)$, \quad $\uA_- \gets \min(\uA, 0)$
  \State $\lA_+ \gets \max(\lA, 0)$, \quad $\lA_- \gets \min(\lA, 0)$
  \State $\boldsymbol{U} \gets \uA_+ (\boldsymbol{x}_0 + \varepsilon \onevec) + \uA_- (\boldsymbol{x}_0 - \varepsilon \onevec) + \ud$
  \State $\boldsymbol{L} \gets \lA_+ (\boldsymbol{x}_0 - \varepsilon \onevec) + \lA_- (\boldsymbol{x}_0 + \varepsilon \onevec) + \ld$
  \State \Return $(\boldsymbol{L}, \boldsymbol{U})$
\EndProcedure
\end{algorithmic}
\end{algorithm}

\parhead{Time complexity} For a neural network with $m$ layers, $n$ neurons per layer, and $n$ output neurons, \base runs in time $O(m^2n^3).$ The algorithm performs two nested for-loops, consisting of the outer forward pass and the inner backward pass, each over $m$ layers. At each layer, \base performs multiplications of $n\times n$ matrices, each requiring time $O(n^3)$.

\subsection{The Relaxation Algorithm}
\label{app:crown-details:relaxation}

The \textsc{Relax} algorithm takes as input a concretized interval $(l,u)$ and an activation function $\sigma$, and outputs scalars $(\ua,\ubof,\la,\lbof)$ satisfying
\begin{equation}
  \la \preact + \lbof \;\le\; \sigma(\preact) \;\le\; \ua \preact + \ubof, \qquad \forall \preact \in [l,u].
\end{equation}
This algorithm is called extensively by \base to produce per-neuron linear relaxations which can then be verified by \tool via the Four-Point Relaxation Gadget in Section~\ref{sec:gadget}.

Below, we describe the \textsc{Relax} procedure for S-shaped activations $\sigma$. The procedure for ReLU activations is provided in Section~\ref{sec:gadget:relu}. We also focus on the selection of the upper bound $h_U(\preact) = \ua \preact + \ubof$, as the lower bound is similar but with the roles of $l$ and $u$ reversed.

\parhead{Procedure}
\base selects an optimal pair $(\ua,\ubof)$ via the following steps:
\begin{enumerate}
  \item Solve the equation 
  \begin{equation}
    \label{eq:crown-relax-selection}
  \sigma'(\tau) = \frac{\sigma(\tau) - \sigma(l)}{\tau - l}
  \end{equation}for $\tau$. This can be solved directly given a closed form for $\sigma$ and $\sigma'$, or iteratively solved via Newton's method. Intuitively, this equation asks for a value $\tau$ such that the tangent line of $\sigma(z)$ at $z=\tau$ is exactly equal to the secant line through $(l,\sigma(l))$ and $(\tau,\sigma(\tau)).$ This equation has two solutions for S-shaped activations, including $\tau = l$ as a trivial root. By abuse of notation, we let $\tau$ denote the non-trivial root.
  \item Choose an upper bound $h_U(z)=\ua z+\ubof$ based on where $\tau$ falls relative to the interval $[l, u]$:
    \begin{itemize}
      \item \textbf{Case 1: $\tau \in (l, u)$.} Output the tangent line for $\sigma$ at $z=\tau$:
      \[
        h_U(\preact) = \frac{\sigma(\tau) - \sigma(l)}{\tau - l}(\preact - l) + \sigma(l).
      \]
      \item \textbf{Case 2: $\tau \le l$.} Then $\sigma$ is concave on $[l, u]$, so any tangent line is an upper bound for $\sigma$ over the interval. Output the tangent at the midpoint $\mu:=\frac{l + u}{2}$:
      \[
      h_U(\preact)=\sigma'(\mu)(\preact-\mu)+\sigma(\mu).
      \]
      \item \textbf{Case 3: $\tau \ge u$.} Output the secant line through the two interval endpoints:
      \[
        h_U(\preact) = \frac{\sigma(u) - \sigma(l)}{u - l}(\preact - l) + \sigma(l).
      \]
    \end{itemize}
  \item Output $(\ua, \ubof)$ from the slope and intercept of $h_U$.
\end{enumerate}
The proof of soundness for this selection (that it produces a valid upper bound) is given in Appendix~\ref{proof:fourpoint}.

\section{Details of \tool}
\label{app:panda-details}

In this appendix, we present the \tool system in full detail. Algorithm~\ref{alg:gadget} presents the prover for the \textit{Four-Point Relaxation Gadget}, accompanying Section~\ref{sec:gadget:sshape}. Appendix~\ref{app:fd} further shows how to eliminate the $T_{\sigma'}$ table in Algorithm~\ref{alg:gadget} through an optimization. Algorithms~\ref{alg:panda} and~\ref{alg:verify} present the full \tool prover and verifier procedures, respectively.

In our algorithms, $\Pilookup$ uses three precomputed public lookup tables over the quantization domain $\mathcal{D}$:
\[
  T_\mathrm{ReLU} := \{(x, \mathrm{ReLU}(x)) : x \in \mathcal{D}\}, \quad
  T_\sigma := \{(x, \sigma(x)) : x \in \mathcal{D}\}, \quad
  T_{\sigma'} := \{(x, \sigma'(x)) : x \in \mathcal{D}\}.
\]
$\Pilookup$ is also used to perform range proofs, as discussed in Section~\ref{sec:pre}.

\begin{algorithm}[H]
\caption{Four-Point Relaxation Gadget prover (upper bound).}
\label{alg:gadget}
\begin{algorithmic}[1]
\Require Witness $(l, u, \ua, \ubof, z_1, z_2)$ and lookup tables $T_\sigma$, $T_{\sigma'}$
\State Compute $s_l \gets \sigma(l),\ s_u \gets \sigma(u),\ s_1 \gets \sigma(z_1),\ s_2 \gets \sigma(z_2),\ s'_1 \gets \sigma'(z_1),\ s'_2 \gets \sigma'(z_2)$.
\State $\Pilookup.\textsc{Prove}\left((l,s_l),(u,s_u),(z_1,s_1),(z_2,s_2)\in T_\sigma\right).$
\State $\Pilookup.\textsc{Prove}\left((z_1,s_1'),(z_2,s_2')\in T_{\sigma'}\right)$.
\State $\Pilookup.\textsc{Prove}(\ua l + \ubof - s_l \ge 0).$\Comment{Left endpoint}
\State $\Pilookup.\textsc{Prove}(\ua u + \ubof - s_u \ge 0)$.\Comment{Right endpoint}
\State $\Pilookup.\textsc{Prove}(z_1 \le 0)$ and $\Pilookup.\textsc{Prove}(z_2 \ge 0)$.\Comment{Tangency point ordering}
\State $\Piarith.\textsc{Prove}(s_i'=\ua)$ for $i \in \{1, 2\}$.\Comment{Tangency point correctness}
\State $\Pilookup.\textsc{Prove}(\ua z_1 + \ubof - s_1 \ge 0)$.\Comment{Left tangency condition}
\State $\Pilookup.\textsc{Prove}\left((u - z_2)(\ua z_2 + \ubof - s_2) \ge 0\right)$.\Comment{Right tangency condition}
\State \textbf{return} all proofs generated.
\end{algorithmic}
\end{algorithm}

\begin{algorithm}[!htbp]
\caption{The \tool prover.}
\label{alg:panda}
\begin{algorithmic}[1]
\Statex \textbf{Public statement.} Input point $\boldsymbol{x}_0 \in \R^{n_0}$, perturbation budget $\varepsilon > 0$, model architecture, robustness claim $(\mathbf{C}, \boldsymbol{u})$, and commitments to witnesses.
\vspace{0.4em}
\Statex \textbf{Private witnesses.} Model parameters $\{\mathbf{W}^{(k)}, \boldsymbol{b}^{(k)}\}_{k=1}^m$ and all values in the quantized \base transcript (Section~\ref{sec:panda}) generated by running Algorithm~\ref{alg:crown}. $\prover$ commits to the witnesses via $\Picom$.
\vspace{0.4em}
\Statex \textbf{Prover claim.} $\forall \boldsymbol{x} \in \Ball(\boldsymbol{x}_0, \varepsilon),\; \mathbf{C} f(\boldsymbol{x}) \le \boldsymbol{u}$.
\vspace{0.4em}
\Statex \textbf{Proof.} $\prover$ runs Algorithm~\ref{alg:crown}, accumulating the following proofs along the way in a list $\Pi$:
\vspace{0.5em}

\Statex \textit{(1) Signed components.} For each call $\mathbf{A}_+ \gets \max(\mathbf{A}, 0),\; \mathbf{A}_- \gets \min(\mathbf{A}, 0)$:
\State $\pi_+ \gets \Pilookup.\textsc{Prove}\bigl((\mathbf{A}, \mathbf{A}_+)\subset  T_\mathrm{ReLU}\bigr)$
\State $\pi_- \gets \Pilookup.\textsc{Prove}\bigl((-\mathbf{A}, -\mathbf{A}_-)\subset  T_\mathrm{ReLU}\bigr)$
\State $\Pi \gets \Pi\cup\{\pi_+,\pi_-\}$

\vspace{0.4em}

\Statex \textit{(2) Activation-relaxation update.} For each call $(\mathbf{A}', \boldsymbol{d}') \gets \textsc{Step1}(\mathbf{A}, \boldsymbol{d}, \ell)$:
\State $\pi_1 \gets \Piarith.\textsc{Prove}\bigl(\mathbf{A}' = \mathbf{A}_+\, \mathrm{diag}(\bua^{(\ell)}) + \mathbf{A}_-\, \mathrm{diag}(\bla^{(\ell)})\bigr)$
\State $\pi_2 \gets \Piarith.\textsc{Prove}\bigl(\boldsymbol{d}' = \boldsymbol{d} + \mathbf{A}_+\, \bubof^{(\ell)} + \mathbf{A}_-\, \blbof^{(\ell)}\bigr)$
\State $\Pi\gets \Pi\cup \{\pi_1,\pi_2\}$

\vspace{0.4em}

\Statex \textit{(3) Linear-map substitution.} For each call $(\mathbf{A}', \boldsymbol{d}') \gets \textsc{Step2}(\mathbf{A}, \boldsymbol{d}, \ell)$:
\State $\pi_3 \gets \Piarith.\textsc{Prove}\bigl(\mathbf{A}' = \mathbf{A}\, \mathbf{W}^{(\ell)}\bigr)$
\State $\pi_4 \gets \Piarith.\textsc{Prove}\bigl(\boldsymbol{d}' = \boldsymbol{d} + \mathbf{A}\, \boldsymbol{b}^{(\ell)}\bigr)$
\State $\Pi\gets \Pi\cup \{\pi_3,\pi_4\}$

\vspace{0.4em}

\Statex \textit{(4) Concretization.} For each call $(\boldsymbol{L}, \boldsymbol{U}) \gets \textsc{Concretize}(\lA, \ld, \uA, \ud, \boldsymbol{x}_0, \varepsilon)$:
\State call \textit{(1) Signed components} for $\uA,\lA$
\State $\pi_5 \gets \Piarith.\textsc{Prove}\bigl(\boldsymbol{U} = \uA_+ (\boldsymbol{x}_0 + \varepsilon \onevec) + \uA_- (\boldsymbol{x}_0 - \varepsilon \onevec) + \ud\bigr)$
\State $\pi_6 \gets \Piarith.\textsc{Prove}\bigl(\boldsymbol{L} = \lA_+ (\boldsymbol{x}_0 - \varepsilon \onevec) + \lA_- (\boldsymbol{x}_0 + \varepsilon \onevec) + \ld\bigr)$
\State $\Pi\gets \Pi\cup \{\pi_5,\pi_6\}$

\vspace{0.4em}

\Statex \textit{(5) Activation relaxation.} For each call $(\ua, \ubof, \la, \lbof) \gets \textsc{Relax}(\sigma, \boldsymbol{L}, \boldsymbol{U})$:
\State $\pi_u \gets \textsc{Relaxation-Gadget}(\sigma, \boldsymbol{L}, \boldsymbol{U}, \ua, \ubof)$ \Comment{calls upper bound Alg.~\ref{alg:gadget}}
\State $\pi_l \gets \textsc{Relaxation-Gadget}(\sigma, \boldsymbol{L}, \boldsymbol{U}, \la, \lbof)$ \Comment{lower bound is analogous}
\State $\Pi\gets \Pi\cup \{\pi_u,\pi_l\}$

\vspace{0.4em}

\Statex \textit{(6) Final robustness claim.} After Algorithm~\ref{alg:crown} terminates at layer $m$, $\prover$ proves that the final concretized upper bound $\boldsymbol{U}^{(m)}$ is at most $\boldsymbol{u}$ entry-wise via a range proof.
\State $\pi_\mathrm{final}\gets \Pilookup.\textsc{Prove}(\boldsymbol{u}-\boldsymbol{U}^{(m)}\geq 0)$
\State $\Pi\gets \Pi\cup\{\pi_\mathrm{final}\}$
\vspace{0.4em}

\Statex \textit{(7) Quantization rescaling.} For each rescaling $q_z=\left\lfloor S_z q_x/S_x\right\rfloor$ with committed remainder $r$:
\State $\pi_{\mathrm{rem}}\gets \Pilookup.\textsc{Prove}(0\leq r<S_x)$\Comment{This is rewritten as $r\ge 0\wedge S_x-r>0$.}
\State $\pi_{\mathrm{div}}\gets \Piarith.\textsc{Prove}(S_zq_x=S_xq_z+r)$
\State $\Pi\gets \Pi\cup \{\pi_{\mathrm{rem}},\pi_{\mathrm{div}}\}$
\vspace{0.5em}

\State \Return $\Pi$
\end{algorithmic}
\end{algorithm}

\begin{algorithm}[!htbp]
\caption{The \tool verifier.}
\label{alg:verify}
\begin{algorithmic}[1]
\Require Public statement $x$ and proof $\Pi$ from Algorithm~\ref{alg:panda}.
\For{each proof $\pi\leftarrow \Piarith.\textsc{Prove}$ in $\Pi$}
    \State run $\Piarith.\textsc{Verify}(x,\pi)$
\EndFor
\For{each proof $\pi\leftarrow\Pilookup.\textsc{Prove}$ in $\Pi$}
  \State run $\Pilookup.\textsc{Verify}(x,\pi)$
\EndFor
\For{each relaxation-gadget proof $\pi\in \Pi$}
    \State run $\textsc{Verify-Relaxation-Gadget}(x,\pi)$
\EndFor
\State \textbf{accept} iff every check above passes
\end{algorithmic}
\end{algorithm}

\section{Comparison with FairProof}\label{comparison}
FairProof~\citep{fairproof} is the only suitable direct comparison, since it is the only ZKP system that certifies local robustness, to the best of our knowledge. We evaluate FairProof's own system head-to-head with \tool on shared benchmarks: the FairProof Adult model and our smallest MNIST models on the first 5 properties \tool proves. FairProof's offline phase was never released, so we reimplemented it and validated it against their released example, using a single core and the same CPUs as in \tool's evaluation. We present the results in Table~\ref{fairproof_compare}.

Where FairProof runs (only on FairProof Adult), \tool requires about half (34.89s) of FairProof’s prover time (66.0s in total). FairProof's verifier time is smaller, but \tool's is still on the order of seconds. However, FairProof cannot scale to any of the larger MNIST models.
\begin{table}[H]
  \caption{FairProof's performance on small benchmarks.}
  \label{fairproof_compare}
  \centering
  \begin{threeparttable}
    \begin{tabular}{lrrrr}
      \toprule
      Model & Offline prove (s) & Online prove (s) & Verify (s) & Proof (KB) \\
      \midrule
      FairProof Adult
        & $9.2$ & $56.8$ & $0.02$ & $19.1$ \\
      MNIST $2\times[20]$ ReLU
        & $1908.0^{\dagger}$
        & \makecell[r]{4/5 timeout ($>$1h),\\ 1/5 OOM ($>$512GiB)} & --- & --- \\
      MNIST $3\times[20]$ ReLU
        & timeout ($>$12h) & --- & --- & --- \\
      MNIST $2\times[1024]$ ReLU
        & timeout ($>$12h) & --- & --- & --- \\
      \bottomrule
    \end{tabular}
    \begin{tablenotes}[flushleft]
    \small
      \item[$\dagger$] The offline phase completed, but the online phase was killed.
    \end{tablenotes}
  \end{threeparttable}
\end{table}

\section{Complete Evaluation}\label{complete_evaluation}

\begin{table}[H]
  \caption{Evaluation. Each row is one model. (adv.) denotes adversarially trained models. 100 tests are run for each model, except for FairProof (only one test is run) and MNIST (only images correctly classified by the model are tested). Prove/verify times show $\mu\pm \sigma$ (mean and standard deviation) over the verified subset only. N is the number of benchmarks evaluated for each model, which is also the number of properties that CROWN certifies for that model.}
  \label{complete_final_least_table}
  \centering
  \small
  \setlength{\tabcolsep}{3.5pt}
  \begin{tabular}{@{}lllS[separate-uncertainty,table-format=3.2(3.1)]S[separate-uncertainty,table-format=2.2(2.1)]rrrr@{}}
    \toprule
    Dataset & Structure & Act. & {Prove (s)} & {Verify (s)} & {Proof} & {CROWN} & {Drift} & {N}\\
            &           &      &             &              & {(MB)} & {(s)} & {(\%)} &       \\
    \midrule
    \multicolumn{9}{@{}l}{\textit{MNIST}} \\
     & $2 \times [20]$ & ReLU & 22.81(20) & 0.97(1) & 3 & 0.00013 & 0.02 & 97\\
     & $2 \times [20]$ & Sigmoid & 58.35(81) & 2.33(3) & 8 & 0.00014 & 0.19 & 94\\
     & $2 \times [20]$ & Tanh & 58.77(39) & 2.34(3) & 8 & 0.00014 & 0.16 & 97\\
     & $3 \times [20]$ & ReLU & 42.64(51) & 1.90(3) & 6 & 0.00018 & 0.04 & 95\\
     & $3 \times [20]$ & Sigmoid & 99.62(120) & 4.21(5) & 14 & 0.00022 & 0.55 & 93\\
     & $3 \times [20]$ & Tanh & 105.56(233) & 4.46(10) & 15 & 0.00019 & 0.40 & 93\\
     & $2 \times [1024]$ & ReLU & 41.70(72) & 1.69(2) & 4 & 0.06 & 0.02 & 98\\
     & $2 \times [1024]$ & Sigmoid & 74.60(149) & 3.26(6) & 9 & 0.06 & 0.62 & 99\\
     & $2 \times [1024]$ & Tanh & 83.17(338) & 3.63(15) & 9 & 0.07 & 0.39 & 98\\
     & $3 \times [1024]$ (adv.) & ReLU & 90.79(44) & 4.68(3) & 9 & 0.22 & -0.18 & 97\\
     & $3 \times [1024]$ & ReLU & 101.58(375) & 4.60(13) & 9 & 0.22 & 0.03 & 98\\
     & $3 \times [1024]$ & Sigmoid & 172.29(304) & 7.76(14) & 19 & 0.21 & 1.66 & 98\\
     & $3 \times [1024]$ & Tanh & 179.27(326) & 7.83(15) & 20 & 0.21 & 1.83 & 98\\
     & $4 \times [1024]$ (adv.) & ReLU & 172.16(389) & 8.96(15) & 16 & 0.45 & 0.01 & 98\\
     & $4 \times [1024]$ & ReLU & 226.08(1467) & 9.56(48) & 17 & 0.44 & -0.01 & 100\\
     & $4 \times [1024]$ & Sigmoid & 327.47(1136) & 14.13(46) & 31 & 0.47 & 2.48 & 97\\
     & $4 \times [1024]$ & Tanh & 333.94(581) & 14.31(28) & 32 & 0.46 & 6.33 & 99\\
    \addlinespace
    \multicolumn{9}{@{}l}{\textit{Other}} \\
    SafeNLP\,med & $2 \times [128]$ & ReLU & 23.04(19) & 0.81(1) & 3 & 0.00013 & -0.05 & 54\\
    SafeNLP\,robot & $2 \times [128]$ & ReLU & 20.33(19) & 0.78(1) & 3 & 0.00014 & -0.29 & 3\\
    LunarLander & $3 \times [64]$ & ReLU & 36.61(29) & 1.46(2) & 5 & 8.1e-5 & -0.09 & 18\\
    FairProof\,Adult & $[8, 2]$ & ReLU & 34.89 & 1.21 & 4 & 1.4e-5 & -12.60 & 1\\
    \bottomrule
  \end{tabular}
\end{table}

\begin{table*}[t]
  \caption{Component-wise breakdown of the prover time.
    \texttt{commit} = polynomial commitments to every witness (Hyrax~\cite{hyrax});
    \texttt{matmul} = the matrix-multiplication verification (\cite{thaler13});
    \texttt{lookup} = all lookup arguments (LogUp-GKR~\cite{logup-GKR});
    \texttt{other} = other cryptographic computation;
    total in Cryptographic = the sum of the four preceding columns;
    total in Non-crypto = total non-cryptographic computation time of the prover
    (including the plaintext quantized-CROWN that generates witnesses);
    \texttt{tangent} = time for computing the tangent points used in linear
    relaxations of S-shaped activation functions, which is a part of
    total in . Upper panel: absolute times; lower panel: shares of
    \texttt{prove}. Note the unit change on \texttt{tangent}.}
  \label{tab:prover-breakdown}
  \centering
  \small
  \begin{tabular*}{\textwidth}{@{\extracolsep{\fill}}ll r rrrrr rr@{}}
    \toprule
    & & &
    \multicolumn{5}{c}{Cryptographic} &
    \multicolumn{2}{c}{Non-crypto} \\
    \cmidrule(lr){4-8}\cmidrule(l){9-10}
    Structure & Act. & \texttt{prove}
      & \texttt{commit} & \texttt{matmul} & \texttt{lookup} & \texttt{other} & total
      & \texttt{tangent} & total \\
    \midrule
    \multicolumn{10}{@{}l}{\emph{Absolute time (s); \texttt{tangent} in ms}} \\
    $3\times[20]$   & ReLU    &  45.08 &  0.93 &  0.35 &  43.59 & 0.21 &  45.07 & 0.000 &  0.01 \\
    $3\times[20]$   & Sigmoid &  92.34 &  0.86 &  0.33 &  90.25 & 0.89 &  92.33 & 0.009 &  0.01 \\
    $3\times[20]$   & Tanh    & 116.45 &  0.94 &  0.36 & 113.27 & 1.87 & 116.44 & 0.013 &  0.01 \\
    $3\times[1024]$ & ReLU    & 106.95 & 15.70 &  8.54 &  66.08 & 0.51 &  90.82 & 0.000 & 16.12 \\
    $3\times[1024]$ & Sigmoid & 170.13 & 16.17 &  8.50 & 127.50 & 2.70 & 154.86 & 0.022 & 15.26 \\
    $3\times[1024]$ & Tanh    & 196.80 & 20.62 & 10.49 & 144.39 & 3.42 & 178.91 & 0.646 & 17.84 \\
    \addlinespace[2pt]
    \midrule
    \multicolumn{10}{@{}l}{\emph{As \% of \texttt{prove} (\texttt{prove} in s)}} \\
    $3\times[20]$   & ReLU    &  45.08 &  2.1 & 0.8 & 96.7 & 0.5 & 100.0 & $<$0.01 &  0.0 \\
    $3\times[20]$   & Sigmoid &  92.34 &  0.9 & 0.4 & 97.7 & 1.0 & 100.0 & $<$0.01 &  0.0 \\
    $3\times[20]$   & Tanh    & 116.45 &  0.8 & 0.3 & 97.3 & 1.6 & 100.0 & $<$0.01 &  0.0 \\
    $3\times[1024]$ & ReLU    & 106.95 & 14.7 & 8.0 & 61.8 & 0.5 &  84.9 & $<$0.01 & 15.1 \\
    $3\times[1024]$ & Sigmoid & 170.13 &  9.5 & 5.0 & 74.9 & 1.6 &  91.0 & $<$0.01 &  9.0 \\
    $3\times[1024]$ & Tanh    & 196.80 & 10.5 & 5.3 & 73.4 & 1.7 &  90.9 & $<$0.01 &  9.1 \\
    \bottomrule
  \end{tabular*}
\end{table*}

\section{Prover Breakdown}\label{prover_breakdown}
We provide the component-wise breakdown of the prover time for three
MNIST $3 \times [20]$ models and three MNIST $3 \times [1024]$ models in Table~\ref{tab:prover-breakdown}.
Every row shows the mean over the first $5$ properties \tool proves.

We identify the largest component as the lookup arguments, but we still observe
that all components scale polynomially in the size of the network, which aligns
with the theoretical guarantees of these cryptographic protocols.

\section{Optimization: Finite Differences}\label{app:fd}
The Four-Point Relaxation Gadget presented in Algorithm~\ref{alg:gadget} requires lookup proofs against both the table $T_\sigma=\{\left(z,\sigma(z)\right)\}$ and the table $T_{\sigma'}=\{\left(z,\sigma'(z)\right)\}$ for its derivative. The $T_{\sigma'}$ checks are equalities, which are more brittle in the quantized setting than inequalities, and precomputing both tables is a memory bottleneck. We therefore eliminate the $T_{\sigma'}$ table using the technique of finite differences. We use an additional condition that $\sigma$ is strictly increasing.

To prove that $\sigma'(z)=\alpha$ for fixed $z\in \R,\alpha\in(0,B]$ up to some quantization error, it suffices to find a nearby $z_0\in (z-\delta,z+\delta)$ satisfying
\begin{equation}
\label{eq:fd}
    \frac{\sigma(z_0)-\sigma(z_0-\delta)}{\delta}\le \alpha\le \frac{\sigma(z_0+\delta)-\sigma(z_0)}{\delta}
\end{equation}
where $\delta$ is an error threshold determined by the quantization; when $\sigma$ is concave near $z$, the inequalities in~\eqref{eq:fd} are reversed. Equation~\eqref{eq:fd} can be encoded in Algorithm~\ref{alg:gadget} using only the lookup tables for $\sigma$. The following theorem demonstrates the sufficiency of~\eqref{eq:fd}.
\begin{theorem} [Finite differences]
\label{fd}
    Let $\sigma$ satisfy Assumption~\ref{ass:sshape} and also be strictly increasing and let $\delta>0.$ Let $\mathcal{D}$ be the set of values exactly represented in our quantization, and suppose that the points of $\mathcal{D}$ are uniformly spaced with spacing at most $\delta$.
    \begin{itemize}
    \item If $\sigma'(z)=\alpha$ and $\sigma$ is convex over $(z-\delta,z+\delta)$, then there exists some $z_0\in \mathcal{D}\cap (z-\delta,z+\delta)$ such that $\sigma(z_0)-\sigma(z_0-\delta)\le \delta\alpha$ and $\sigma(z_0+\delta)-\sigma(z_0)\ge \delta \alpha$.
    \item If $\sigma'(z)=\alpha$ and $\sigma$ is concave over $(z-\delta,z+\delta)$, then there exists some $z_0\in \mathcal{D}\cap (z-\delta,z+\delta)$ such that $\sigma(z_0)-\sigma(z_0-\delta)\ge \delta\alpha$ and $\sigma(z_0+\delta)-\sigma(z_0)\le \delta \alpha$.
    \end{itemize}
Conversely, if $\operatorname{sign}(\sigma(z_0)-\sigma(z_0-\delta)-\delta\alpha)\neq \operatorname{sign}(\sigma(z_0+\delta)-\sigma(z_0)-\delta \alpha)$ then there exists some $z\in (z_0-\delta,z_0+\delta)$ such that $\sigma'(z)=\alpha.$
\end{theorem}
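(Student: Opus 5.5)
The plan is to recast both directions in terms of the forward-difference function
\[
g(t) := \sigma(t+\delta) - \sigma(t) - \delta\alpha,
\]
so that the two inequalities in~\eqref{eq:fd} read $g(z_0-\delta)\le 0 \le g(z_0)$ in the convex case, and the reverse in the concave case. The converse statement is the easy direction and I would do it first. By the Mean Value Theorem, $g(z_0-\delta) = \delta(\sigma'(\xi_1)-\alpha)$ for some $\xi_1\in(z_0-\delta,z_0)$, and $g(z_0) = \delta(\sigma'(\xi_2)-\alpha)$ for some $\xi_2\in(z_0,z_0+\delta)$. If the two signs differ, then either one of them is zero, so the corresponding $\xi_i$ already satisfies $\sigma'(\xi_i)=\alpha$, or they have strictly opposite signs. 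In the latter case, since $\sigma$ is $C^1$ by Assumption~\ref{ass:sshape}, the Intermediate Value Theorem applied to $\sigma'$ on $[\xi_1,\xi_2]\subset(z_0-\delta,z_0+\delta)$ gives the desired $z$.

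For the forward direction I would treat the convex case; the concave case is the mirror image, with all inequalities reversed and the roles of left and right swapped. The first step is to enlarge the convexity region. Assumption~\ref{ass:sshape} has a single inflection point at $0$, so convexity of $\sigma$ on $(z-\delta,z+\delta)$ means this interval lies in $(-\infty,0]$. Since $\sigma$ is convex on all of $(-\infty,0]$, it is in fact convex on $(-\infty,z+\delta)$, hence on $[z-2\delta,z+\delta]$, which covers every point the argument touches.

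On this region $\sigma'$ is nondecreasing, so $g$ is nondecreasing on $[z-2\delta,z]$. The tangent-versus-secant inequalities for a convex function give two endpoint facts:
\begin{itemize}
\item $g(z)\ge 0$, because the secant slope on $[z,z+\delta]$ is at least $\sigma'(z)=\alpha$;
\item $g(z-\delta)\le 0$, because the secant slope on $[z-\delta,z]$ is at most $\sigma'(z)=\alpha$.
\end{itemize}

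Next, let $t^* := \inf\{t\in[z-\delta,z] : g(t)\ge 0\}$. By continuity and monotonicity of $g$, this gives $g(t)\ge 0$ for $t\ge t^*$ and $g(t)\le 0$ for $t\le t^*$ within the region. Any $z_0\in[t^*,t^*+\delta]$ then satisfies $g(z_0)\ge 0$ and $g(z_0-\delta)\le 0$, which is exactly the required pair of inequalities. The closed interval $[t^*,t^*+\delta]$ has length $\delta$, so it contains a point of the uniformly spaced grid $\mathcal{D}$ with spacing at most $\delta$. It also lies in $[z-\delta,z+\delta]$ because $t^*\in[z-\delta,z]$.

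The step I expect to be the main obstacle is the boundary bookkeeping: ensuring the chosen grid point lies in the open interval $(z-\delta,z+\delta)$ rather than at $z\pm\delta$. Take first the endpoint $z_0=z-\delta$, which arises only when $t^*=z-\delta$, and hence $g(z-\delta)=0$. If $[t^*,t^*+\delta]$ contains another grid point, I would use that one instead. If it contains no other grid point, the next grid point lies at distance exactly $\delta$, i.e.\ at $z$ itself; since $g(z)\ge 0$ and $g(z-\delta)\le 0$, the point $z_0=z$ works. The endpoint $z_0=z+\delta$ would require $t^*=z$, and then $z\in[t^*,t^*+\delta]$ is also admissible in the same way. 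If needed, I would tighten the statement to a half-open window to avoid these degenerate cases.
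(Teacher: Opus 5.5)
Your route is the paper's own. Your window $[t^*,t^*+\delta]$ is exactly its $A\cap B=[z_-,z_+]$, which the paper obtains from the shift $g(x)=f(x+\delta)$, and your converse via the Mean Value and Intermediate Value Theorems is fine. The gap is the sentence ``any $z_0\in[t^*,t^*+\delta]$ then satisfies $g(z_0)\ge 0$.'' You proved $g$ monotone only on $[z-2\delta,z]$, and the claim that $[z-2\delta,z+\delta]$ ``covers every point the argument touches'' is false. For $z_0\in(z,t^*+\delta]$, which is nonempty whenever $t^*>z-\delta$, the quantity $g(z_0)$ uses $\sigma(z_0+\delta)$ with $z_0+\delta\in(z+\delta,z+2\delta]$, possibly beyond the inflection point. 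Since $[t^*,z]$ may be shorter than the grid spacing, the grid point you find may lie only in this unjustified part, and there the inequality can genuinely fail. Take $\delta=1$, $z=-1$, and
$\sigma'(w)=1+w/100$ on $[-2,0]$, $\sigma'(w)=e^{-100w}$ on $[0,\infty)$, $\sigma'(w)=0.98e^{w+2}$ on $(-\infty,-2]$.
This $\sigma$ satisfies Assumption~\ref{ass:sshape}, is strictly increasing, and is convex on $(-2,0)$. Then $\alpha=0.99$ and $g(t)=(2t+3)/200$ on $[-2,-1]$, so $t^*=-3/2$, yet $g(-3/4)\approx 0.747+0.010-0.99<0$. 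For the uniformly spaced grid $\mathcal{D}=-\frac34+\mathbb{Z}$ we have $\mathcal{D}\cap(-2,0)=\{-\frac74,-\frac34\}$. Both points violate $\sigma(z_0+\delta)-\sigma(z_0)\ge\delta\alpha$, so the first bullet is false as stated. The paper's proof has the same hole: it treats $f$ and $g$ as increasing on all of $\R$ to write $A=(-\infty,z_+]$ and $B=[z_-,\infty)$. No S-shaped $\sigma$ allows that, so following the paper would not rescue your argument.

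An extra hypothesis is needed. The natural one is that $\sigma'$ is even, as for sigmoid, tanh and arctan (see the Remark after the proof of Theorem~\ref{thm:casefreefourpoint}). Convexity on $(z-\delta,z+\delta)$ forces $z+\delta\le 0$. So for $z_0\in(z,z+\delta]$ one has $[z_0,z_0+\delta]\subseteq[z,z+2\delta]\subseteq[z,-z]$. On $[z,-z]$, evenness together with monotonicity of $\sigma'$ on each side of $0$ gives $\sigma'\ge\sigma'(z)=\alpha$, hence $g(z_0)\ge 0$ on the missing range. Meanwhile $g(z_0-\delta)\le 0$ already follows from your monotonicity on $[z-2\delta,z]$. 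In the concave case the uncovered range is mirrored: it is $z_0\in[t^*,z)$, where $g(z_0-\delta)$ uses values of $\sigma$ to the left of $z-\delta$, and the same evenness argument on $[-z,z]$ closes it. Alternatively, strengthen the hypothesis to convexity on $(z-\delta,z+2\delta)$ (resp.\ concavity on $(z-2\delta,z+\delta)$), which makes $g$ monotone on all of $[z-2\delta,z+\delta]$. Finally, the boundary bookkeeping you expected to be the main obstacle is not one. With $\sigma'$ strictly increasing on $(-\infty,0)$, as the paper uses in proving Lemma~\ref{lemma:two_tangency_points}, you get $g(z-\delta)<0<g(z)$. Hence $t^*\in(z-\delta,z)$ and $[t^*,t^*+\delta]\subset(z-\delta,z+\delta)$ automatically.
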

\begin{proof}
Consider the first statement. Define the finite difference functions $f(x)=\sigma(x)-\sigma(x-\delta)$ and $g(x)=\sigma(x+\delta)-\sigma(x)$. Since $\sigma$ is convex over $(z-\delta, z+\delta)$, and its derivative $\sigma'$ is strictly increasing over this region, which implies that $f(x)$ and $g(x)$ are also strictly increasing.

By the Mean Value Theorem, there exists some $c_1 \in (z-\delta, z)$ such that $f(z) = \delta\sigma'(c_1)$. Since $c_1 < z$ and $\sigma'$ is increasing, $\sigma'(c_1) \le \sigma'(z) = \alpha$, giving $f(z) \le \delta\alpha$. Similarly, there exists $c_2 \in (z, z+\delta)$ such that $g(z) = \delta\sigma'(c_2)$. Since $c_2 > z$, $\sigma'(c_2) \ge \sigma'(z) = \alpha$, yielding $g(z) \ge \delta\alpha$.

Define $A=f^{-1}((-\infty,\delta \alpha])$ and $B=g^{-1}([\delta \alpha,\infty))$. Since $f$ and $g$ are increasing and continuous, $A=(-\infty,z_+]$ and $B=[z_-,\infty)$ for some $z_-,z_+$ which satisfy $f(z_+)=\delta \alpha=g(z_-)$. Also, $z\in A\cap B$, so $z_-\le z_+$ and $A\cap B=[z_-,z_+]$. Now observe that $g(x)=f(x+\delta)$ is simply a translation, which implies that $g(z_+-\delta)=f(z_+)=\delta \alpha$. Since $g$ is strictly increasing, $z_-$ is the unique point with $g(z_-)=\delta\alpha$, so $z_-=z_+-\delta$ and $A\cap B=[z_-,z_+]$ is of length $\delta$. Since the points of $\mathcal{D}$ are spaced at most $\delta$ apart, there exists some point $z_0\in\mathcal{D}$ within $A\cap B$, and this $z_0$ therefore satisfies $f(z_0)=\sigma(z_0)-\sigma(z_0-\delta)\le \delta \alpha$ and $g(z_0)=\sigma(z_0+\delta)-\sigma(z_0)\ge \delta \alpha.$ 

The second statement follows analogously, where $\sigma',f,$ and $g$ are all instead decreasing. 

Now consider the converse statement. Without loss of generality, suppose that
\[
\sigma(z_0)-\sigma(z_0-\delta)-\alpha\delta \le 0\quad \text{and} \quad \sigma(z_0+\delta)-\sigma(z_0)-\alpha\delta\ge 0.
\]
By rearranging, we obtain
\[\frac{\sigma(z_0)-\sigma(z_0-\delta)}{z_0-(z_0-\delta)} \le \alpha\quad \text{and} \quad \frac{\sigma(z_0+\delta)-\sigma(z_0)}{(z_0+\delta)-z_0} \ge \alpha.
\]
Then by the Mean Value Theorem, there exist some $z_0^-\in (z_0-\delta,z_0)$ and $z_0^+\in (z_0,z_0+\delta)$ such that $\sigma'(z_0^-)\le \alpha$ and $\sigma'(z_0^+)\ge \alpha.$ By the Intermediate Value Theorem applied to $\sigma'$, there must be some $z\in (z_0^-,z_0^+)\subset (z_0-\delta,z_0+\delta)$ such that $\sigma'(z)=\alpha.$ The case where the signs are flipped follows similarly.
\end{proof}

\parhead{Instantiation in Algorithm~\ref{alg:gadget}} This theorem shows that one can select a witness value $z_0$ in our quantization which satisfies the above conditions, and $z_0$ will be a $\delta$-approximation of the solution $z$ satisfying $\sigma' (z)=\alpha.$ For cases where $\sigma$ is neither concave nor convex over $(z-\delta,z+\delta)$, it suffices to pick $z_0=0$ as a $\delta$-approximation. Notably, these conditions only depend on $\sigma$ and not $\sigma'.$ This optimization can be encoded in Algorithm~\ref{alg:gadget}.

\section{Proof of Lemma~\ref{lemma:two_tangency_points}}\label{proof:two_tangency_points}
\begin{proof}
By assumption, $\sigma$ is convex on $(-\infty,0)$ and concave on $(0,+\infty).$ Thus, $\sigma'$ is continuous, bounded below by 0, strictly increasing on $(-\infty,0)$ and strictly decreasing on $(0,+\infty).$ $\sigma'$ attains its maximum at $x=0$, where $\sigma' (0)=B$. Additionally, since $\sigma$ is bounded, it follows that $\lim_{x\rightarrow \pm \infty} \sigma'(x)=0$. Therefore, by the Intermediate Value Theorem applied to $\sigma'$, every point $\alpha \in (0,B)$ has one $\sigma'$-preimage in $(-\infty,0)$ and one $\sigma'$-preimage in $(0,+\infty).$ For $\alpha = B$, the unique preimage is $z_1 = z_2 = 0$, since $\sigma'$ attains its maximum $B$ only at $x=0$.
\end{proof}

\section{Proof of Theorem \ref{thm:casefreefourpoint}}
\label{proof:fourpoint}
\textbf{Theorem \ref{thm:casefreefourpoint}, part (a) (Case-free four-point soundness).}
\textit{Let $\sigma$ satisfy Assumption~\ref{ass:sshape}, and fix an interval $[l, u] \subseteq \mathbb{R}$. Consider a candidate linear \textbf{upper bound} $h_U(\preact) = \ua \preact + \ubof$ with $\ua \in (0, B]$ and two (possibly coinciding) points of tangency $z_1 \le 0 \le z_2$ satisfying $\sigma'(z_1) = \sigma'(z_2) = \ua$. If the following conditions are satisfied, then $h_U$ is a valid upper bound on $[l, u]$.
\begin{itemize}
    \item[(i)] Endpoints: $\sigma(l) \le h_U(l)$ and $\sigma(u) \le h_U(u)$.
    \item[(ii)] Left point of tangency: $\sigma(z_1) \le h_U(z_1)$.
    \item[(iii)] Right point of tangency: $0 \le (u - z_2)\bigl(h_U(z_2) - \sigma(z_2)\bigr)$.
\end{itemize}
}

\textit{Consider a candidate linear \textbf{lower bound} $h_L(\preact) = \la \preact + \lbof$ with $\la \in (0, B]$ and two (possibly coinciding) points of tangency $z_1 \le 0 \le z_2$ satisfying $\sigma'(z_1) = \sigma'(z_2) = \la$. If the following conditions are satisfied, then $h_L$ is a valid lower bound on $[l, u]$.
\begin{itemize}
    \item[(i)] Endpoints: $\sigma(l) \ge h_L(l)$ and $\sigma(u) \ge h_L(u)$.
    \item[(ii)] Right point of tangency: $\sigma(z_2) \ge h_L(z_2)$.
    \item[(iii)] Left point of tangency: $0 \le (z_1 - l)\bigl(\sigma(z_1) - h_L(z_1)\bigr)$.
\end{itemize}
}
\begin{proof}
    We prove soundness for the upper bound, and the lower bound is analogous. Define 
    \[
    f(\preact) := h_U(\preact) - \sigma(\preact).
    \]
    It suffices to show $f \ge 0$ on $[l, u]$. Since $f$ is $C^1$ by Assumption~\ref{ass:sshape}, the critical points of $f$ are the roots of $f'$ given by solving
    \[
    f'(\preact) = \ua - \sigma'(\preact)=0.
    \]
    By Lemma~\ref{lemma:two_tangency_points}, there are two such solutions $z$ which we label $z_1,z_2$. By the Interior Extremum Theorem, the minimum of $f$ on $[l, u]$ is attained either at an endpoint or at a critical point in $(l, u)$. It therefore suffices to show $f \ge 0$ at $l$, $u$, and at each critical point that falls inside $[l, u]$.

    \begin{itemize}
        \item Endpoints: $f(l), f(u) \ge 0$ by condition~(i).
        \item Left critical point: $f(z_1) \ge 0$ by condition~(ii).
        \item Right critical point: if $z_2\in [l,u]$:
        \begin{itemize}
            \item if $z_2 = u$, $f(z_2) = f(u) \ge 0$ by condition~(i).
            \item if $z_2 < u$, condition~(iii) implies $f(z_2)\geq 0.$
        \end{itemize} 
    \end{itemize}
    In all cases $f \ge 0$ on $[l, u]$, so $h_U \ge \sigma$ on $[l, u]$.
\end{proof}
\begin{remark}
Many activations, such as sigmoid, tanh, and arctan, satisfy an additional property: their derivative $\sigma'$ is an even function. For these functions $z_1 = -z_2$, so it suffices to include only one of the two tangency points as a witness in Algorithm~\ref{alg:gadget}.
\end{remark}

\textbf{Theorem \ref{thm:casefreefourpoint}, part (b) (Case-free four-point completeness).}
\label{thm:casefreecompleteness}
\textit{The \base algorithm selects linear relaxations $h_L$ and $h_U$ that satisfy the criteria of part~(a).}
\begin{proof}
    We prove the upper-bound case, and the lower bound is analogous. The selection procedure for $h_U(z)=\ua z+\ubof$ is described in Appendix~\ref{app:crown-details:relaxation}. We verify that each of its three cases produces an upper bound $h_U$ satisfying conditions~(i), (ii), and (iii) of Theorem~\ref{thm:casefreefourpoint} part~(a). Throughout, let $z_1\le 0\le z_2$ denote the tangency points satisfying $\sigma'(z_1)=\sigma'(z_2)=\ua$. Let $\tau$ denote the non-trivial root of
    \[
    \sigma'(\tau)=\frac{\sigma(\tau)-\sigma(l)}{\tau-l},
    \]
    computed in step~1 of the selection procedure. $\tau$ is the unique value other than $l$ itself at which the tangent line of $\sigma(z)$ at $z=\tau$ coincides with the secant line through $(l,\sigma(l))$ and $(\tau,\sigma(\tau))$.
    
    We will use throughout this proof that $\tau$ and $l$ must lie on opposite sides of the inflection point $0$, and therefore have opposite signs. This is because any tangent line to $\sigma$ will also intersect $\sigma$ on the opposite side of the inflection point $0$ and nowhere else, since $\sigma''$ changes signs only at $z=0.$

    \begin{itemize}
        \item \textbf{Case 1: $\tau \in (l, u)$.} The procedure outputs the line
        \[
        h_U(z)=\sigma'(\tau)(z-\tau)+\sigma(\tau),
        \]
        which is simultaneously tangent to $\sigma$ at $z=\tau$ and is the secant connecting $(l,\sigma(l))$ to $(\tau,\sigma(\tau))$. It follows that $\ua=\sigma'(\tau)$ and $z_2=\tau$, where we use that $\tau>0$ since the tangent-secant point lies in the concave region.

        \emph{Condition~(i):} The left endpoint satisfies $h_U(l)=\sigma(l)$. For the right endpoint, $\sigma(u)\leq h_U(u)$ since $\sigma(\tau)=h_U(\tau)$ for $\tau\in(l,u)$ and $\sigma$ is concave on $[\tau,u]\subset [0,\infty)$.

        \emph{Condition~(ii):} Note that $\sigma'(z) \ge \ua$ for all $z \in [z_1, z_2]$. It follows that
        \[
        \sigma(z_1) = \sigma(z_2) - \int_{z_1}^{z_2} \sigma'(z) dz \le \sigma(z_2) - \int_{z_1}^{z_2} \ua dz = \sigma(z_2) - \ua(z_2 - z_1).
        \]
        Since $h_U$ has slope $\ua$ and $h_U(z_2) = \sigma(z_2)$, the right-hand side is equal to
        \[
        h_U(z_2) - \ua(z_2 - z_1) = h_U(z_1).
        \]
        Thus, $h_U(z_1) \ge \sigma(z_1)$.

        \emph{Condition~(iii):} This follows since $h_U(z_2)-\sigma(z_2)=0$.

        \item \textbf{Case 2: $\tau \le l$.} This implies that $l\ge 0$, since if $l<0$ then $\tau>0$, which contradicts Case 2. The procedure outputs the tangent line at the midpoint $\mu:=(l+u)/2$:
        \[
        h_U(z)=\sigma'(\mu)(z-\mu)+\sigma(\mu).
        \]
        Since $[l,u]\subset [0,\infty)$, $\sigma$ is concave on $[l,u]$, so the tangent at any point within this interval dominates $\sigma$ on the entire interval. It follows that $\ua=\sigma'(\mu)$ and $z_2=\mu$.

        \emph{Condition~(i):} Since $\sigma$ is concave on $[l,u]$, the tangent line $h_U$ at $\mu\in[l,u]$ satisfies $h_U(l)\ge \sigma(l)$ and $h_U(u)\ge \sigma(u)$.

        \emph{Condition~(ii):} As in Case~1, $h_U$ is tangent to $\sigma$ at $z_2$ so $h_U(z_2)=\sigma(z_2)$. The identical integral argument over $[z_1,z_2]$ yields $h_U(z_1)\ge \sigma(z_1)$.

        \emph{Condition~(iii):} This follows since $h_U(z_2)-\sigma(z_2)=0.$

        \item \textbf{Case 3: $\tau \ge u$.} This implies that $l\le 0$ and therefore $\tau>0$, since if $l>0$ then $\tau<0$, which contradicts Case 3. The procedure outputs the secant line through $(l,\sigma(l))$ and $(u,\sigma(u))$:
        \[
        h_U(z)=\frac{\sigma(u)-\sigma(l)}{u-l}(z-l)+\sigma(l).
        \]
        We first observe that $z_2\ge u$. Because $\tau$ maximizes the secant slope from $l$, we have
        \[
        \sigma'(\tau)=\frac{\sigma(\tau)-\sigma(l)}{\tau-l}\geq \frac{\sigma(u)-\sigma(l)}{u-l}=\ua=\sigma'(z_2).
        \]
        Since $\tau, z_2 \in [0,\infty)$ and $\sigma'$ is decreasing over this interval, $\sigma'(\tau)\ge \sigma'(z_2)$ implies $z_2\ge \tau\ge u$.

        \emph{Condition~(i):} $h_U(l)=\sigma(l)$ and $h_U(u)=\sigma(u)$ by construction.

        \emph{Condition~(ii):} Note that $z_1,l\le 0$, so $\sigma'$ is increasing between $l$ and $z_1$. Thus, if $z_1\le l$ then $\sigma'(z)\ge \ua$ for $z\in[z_1,l]$, and if $z_1>l$ then $\sigma'(z)\le \ua$  for $z\in[l,z_1].$ Either way,
        \[
        \sigma(z_1) = \sigma(l)-\int_{z_1}^{l}\sigma'(z)dz \le \sigma(l)-\int_{z_1}^{l}\ua dz=\sigma(l)-\ua(l-z_1)=h_U(z_1).
        \]
        \emph{Condition~(iii):} Since $z_2\ge u$, the first factor satisfies $u-z_2\le 0$. We show that the second factor satisfies $h_U(z_2)-\sigma(z_2)\le 0$. To see this, note that $\sigma'(z)\ge \ua$ for all $z\in [z_1,z_2]$, and that $z_1\in [l,u]$ by the Mean Value Theorem, implying that $\sigma'(z)\ge \ua$ for all $z\in[u,z_2].$ Integrating yields
        \[
        \sigma(z_2)=\sigma(u)+\int_{u}^{z_2}\sigma'(z) dz\ge \sigma(u)+\int_{u}^{z_2}\ua dz=\sigma(u)+\ua(z_2-u).
        \]
        Since $h_U$ has slope $\ua$ and $h_U(u)=\sigma(u)$, the right-hand side is equal to
        \[
        h_U(u)+\ua(z_2-u)=h_U(z_2).
        \]
        Thus, $\sigma(z_2)\ge h_U(z_2).$ \qedhere
    \end{itemize}
\end{proof}

\end{document}